\newif\ifnips
\renewcommand{\P}{\mathbb{P}}
\renewcommand{\phi}{\varphi}
\newcommand{\eps}{\varepsilon}
\newcommand{\la}{\langle}
\newcommand{\ra}{\rangle}
\newtheorem{theorem}{Theorem}[section]
\newtheorem{lemma}{Lemma}
\newtheorem{conjecture}{Conjecture}
\newtheorem{remark}{Remark}
\newenvironment{example}[1]
{
 \noindent\textbf{Example #1:} \par
  \noindent\ignorespaces
}
{
  \par\noindent
  \ignorespacesafterend
}
\renewcommand{\P}{\mathbb{P}}
\renewcommand{\phi}{\varphi}
\newcommand{\eps}{\varepsilon}
\newcommand{\la}{\langle}
\newcommand{\ra}{\rangle}
\theoremstyle{plain}
\newtheorem{conjecture}{Conjecture}
\def\dim{\mathop{\rm dim}}
\def\exp{\mathop{\rm exp}}
\title{\textbf{Clustering in Causal Attention Masking}}
\author{
  Nikita Karagodin\thanks{Department of EECS, MIT, Cambridge, MA, USA}\hspace{10mm}%
  Yury Polyanskiy\thanks{Department of EECS, MIT, Cambridge, MA, USA}\hspace{10mm}%
  Philippe Rigollet\thanks{Department of Mathematics, MIT, Cambridge, MA, USA}
}
\date{}
\title{\textbf{Clustering in Causal Attention Masking}}
\author{
  Nikita Karagodin\thanks{Department of EECS, MIT, Cambridge, MA, USA}\hspace{10mm}%
  Yury Polyanskiy\thanks{Department of EECS, MIT, Cambridge, MA, USA}\hspace{10mm}%
  Philippe Rigollet\thanks{Department of Mathematics, MIT, Cambridge, MA, USA}
}
\date{}
\begin{document}

\maketitle

\begin{abstract}
This work presents a modification of the self-attention dynamics proposed by~\citet{mathpersp23} to better reflect the practically relevant, causally masked attention used in transformer architectures for generative AI. This modification translates into an interacting particle system that cannot be interpreted as a mean-field gradient flow. Despite this loss of structure, we significantly strengthen the results of~\citet{mathpersp23} in this context: While previous rigorous results focused on cases where all three matrices (Key, Query, and Value) were scaled identities, we prove asymptotic convergence to a single cluster for arbitrary key-query matrices and a value matrix equal to the identity.
Additionally, we establish a connection to the classical R\'enyi parking problem from combinatorial geometry to make initial theoretical steps towards demonstrating the existence of meta-stable states.

  \end{abstract}

\def\nbyp#1{\textcolor{red}{{\bf YP:} #1}}
\definecolor{MIT}{cmyk}{.24, 1.00, .78, .17} 
\newcommand{\ndpr}[1]{ {\colorbox{MIT}{\color{white} \textsf{PR}} \color{MIT}{#1}} }

\section{Introduction}
The introduction of the Transformer architecture \citet{vaswani2017attention} has markedly impacted the landscape of natural language processing (NLP), signaling the advent of large language models. Central to the Transformer architecture is the \emph{self-attention mechanism}, a special kind of layer that distinguishes it from preceding models such as ResNets. This innovation has yielded unprecedented performance not only in machine translation and text summarization but also in areas beyond NLP, including computer vision, speech recognition, and robotics. The flexibility and efficiency of Transformers underscore their integral role in the progression of artificial intelligence. Despite their widespread use, the theoretical foundations underlying their success remain underexplored.

Following~\citet{sander2022sinkformers}, recent studies by \citet{clusters24} and \citet{mathpersp23} have proposed a mathematical framework to analyze Transformers as interacting particle systems, demonstrating that tokens, when modeled as particles, exhibit clustering under certain conditions on the Key, Query, and Value matrices. These works primarily focus on full (mean-field) attention mechanisms, where each token can interact with every other token. Building upon this foundation, our research extends the analysis to \emph{causal} attention mechanisms, wherein each token is restricted to interact only with preceding tokens. This distinction is crucial, as causal attention is prevalent in Transformer models employed in generative AI and known as decoder architectures. 

Causal attention is crucial for sequence generation tasks, ensuring that each token only attends to previous tokens and not future ones, thereby preserving the correct temporal order. This mechanism, also known as autoregressive attention, masks future tokens during attention computation to prevent the model from accessing information it hasn't generated yet. At inference time, causal attention allows the model to generate text one token at a time, using previously generated tokens to inform the next, ensuring coherent and contextually accurate sequences. This step-by-step generation process is computationally efficient, as each token is produced in a forward pass without needing to revisit previous steps. In contrast to full attention, which considers all tokens simultaneously and is suitable for tasks like machine translation where the entire sequence is known, causal attention is essential for tasks requiring real-time, sequential output. This computational advantage explains the pervasiveness of causal attention not only in natural language processing but also in image generation with tools like DALL-E~\citep{dalle}, VQGAN~\citep{vqgan}, or Parti~\citep{parti} and multimodal foundation models,  notably Chameleon~\citep{chameleonteam2024chameleon}. More generally, the use of \emph{masked} attention where tokens pay attention to a subset of other tokens has been driving recent scaling efforts and has led to state-of-the-art models such as MUSE~\citep{muse} or Alphafold~3~\citep{alphafold3}.
Causal attention can also be recast as an interacting particle system but it requires different analytical techniques. This is the goal of our paper.

\noindent \textbf{Our contributions.}  Our main theoretical result establishes asymptotic clustering of tokens for causal self-attention transformer modeled as an interacting particles system on the sphere (Theorem~\ref{thm1}). While mathematically accurate, this asymptotic collapse to a single cluster is seldom observed numerically. Instead, particles collapse to multiple clusters and stay in this configuration for a very long time (see Fig.~\ref{fig:dynamical_system_evolution} for a representative example) --- such \emph{meta-stable} states were already alluded to in~\citet{mathpersp23} and their study was recently initiated in~\citet{geshkovski2024dynamic}.
In Section~\ref{sec:meta-clustering} we describe such meta-stable states using analogy with the R\'enyi parking process (Lemma~\ref{lemma:meta}, Theorem~\ref{thm:fixed_centers}). Additionally, Theorem~\ref{thm:fixed_centers} covers asymptotic clustering of tokens for causal self-attention with additional cross-attention component. Moreover, we predict that, akin to linear dynamical systems, the most important factors that qualitatively describe final particles configuration both in causal and full-attention cases are the eigenvalue of the Value matrix $V$ with the largest real part $\lambda_{\max}$ and its eigenspace $L$, while Query and Key matrices $Q, K$ and temperature parameter $\beta$ do not matter.  Our conjectured atlas of possible meta-stable configurations is listed in Table~\ref{tab:eigenvalues}. We prove the result stated as the first line of this table, namely that particles eventually collapse into a point when $V = I_d$ in Theorem~\ref{thm1}. We remark that assumptions of Theorem~\ref{thm1} are much weaker than for the similar results in the full-attention case~\cite{mathpersp23}, in particular we put no constraints on $K,Q$ or $\beta$.
This work is a combination of rigorous mathematical results and non-trivial predictions based
on analytical insights and numerical simulations. We summarize all limitations in Section~\ref{sec:limitations}.

\begin{table}[t]
  \caption{Possible Final Configurations of Particles}
  \label{tab:eigenvalues}
  \centering
  \begin{tabular}{cccc}
    \toprule
    Largest Eigenvalue & Multiplicity & Final Configuration & Figure \\
    \midrule
    $\lambda_{\max}>0$ & d & First particle $x_0$ & \ref{fig:V111} \\
    $\lambda_{\max}>0$ & $\geq 2$ & One point in $L$ & \ref{fig:V110} \\
    $\lambda_{\max}>0$ & 1 & Two points $\xi$ and $-\xi$ & \ref{fig:V100} \\
    $\lambda_{\max}<0$ & $\geq 2$ & Point cloud around $L$ & \ref{fig:V_-112,2} \\
    $\lambda_{\max}<0$ & 1 & Two point clouds around $\xi$ and $-\xi$ & \ref{fig:V_-122} \\
    \bottomrule
  \end{tabular}
\end{table}

\begin{figure}[t]
    \centering
    \begin{subfigure}[b]{0.27\textwidth}
        \includegraphics[width=\textwidth]{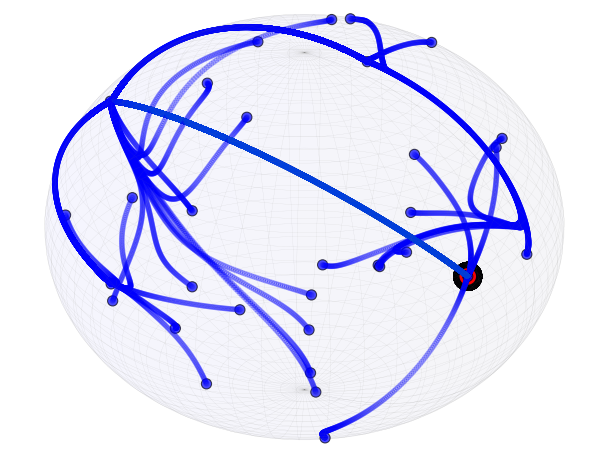}
        \caption{V = diag(1, 1, 1)}
        \label{fig:V111}
    \end{subfigure}
    \hfill
    \begin{subfigure}[b]{0.27\textwidth}
        \includegraphics[width=\textwidth]{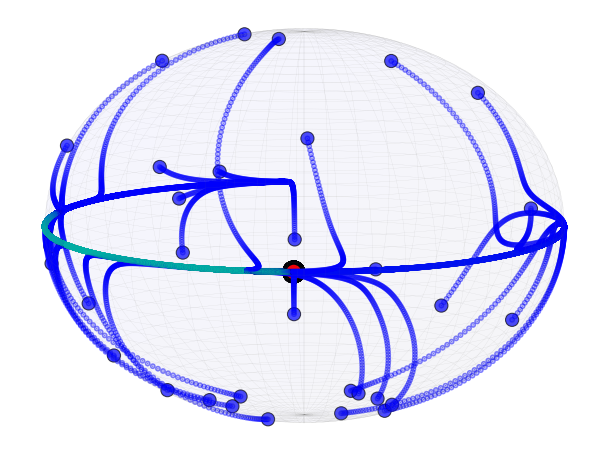}
        \caption{V = diag(1, 1, 0)}
        \label{fig:V110}
    \end{subfigure}
    \hfill
    \begin{subfigure}[b]{0.27\textwidth}
        \includegraphics[width=\textwidth]{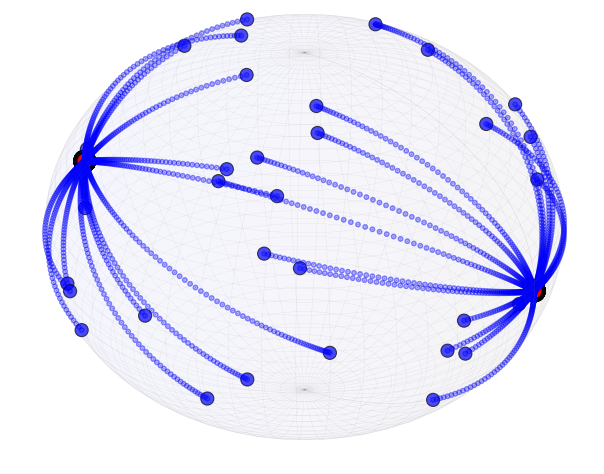}
        \caption{V = diag(1, 0, 0)}
        \label{fig:V100}
    \end{subfigure}
    \vskip\baselineskip
    \hspace*{\fill} 
    \begin{subfigure}[b]{0.27\textwidth}
        \includegraphics[width=\textwidth]{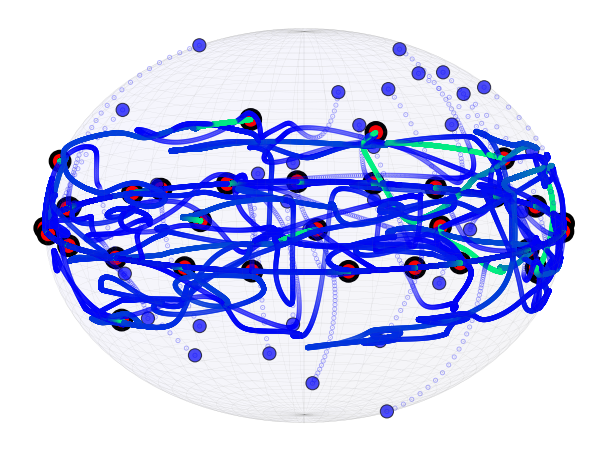}
        \caption{V = diag(-1, -1, -3)}
        \label{fig:V_-112,2}
    \end{subfigure}
    \hfill
    \begin{subfigure}[b]{0.27\textwidth}
        \includegraphics[width=\textwidth]{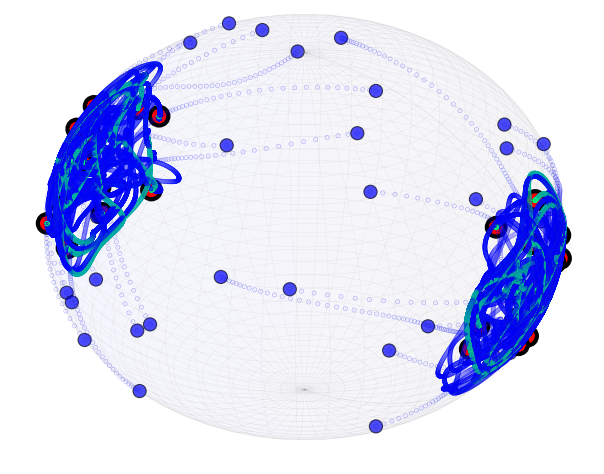}
        \caption{V = diag(-1, -3, -3)}
        \label{fig:V_-122}
    \end{subfigure}
    \hspace*{\fill} 
    \caption{Particle trajectories for different Value matrices. In all cases we take simple Query and Key matrices $K = Q = I_d$, temperature $\beta = 9$ and final time $T = 5000$ for $n = 32$ particles initialized uniformly at random on the sphere. Positions of particles at time $T$ are indicated by a red dot.}
    \label{fig:Value_matrices}
\end{figure}


\noindent{\bf Related work.} Our work  builds upon the framework of~\citet{mathpersp23,clusters24} where clustering properties of transformers are analyzed as systems of particles interacting on the sphere. Specifically,~\cite{mathpersp23} proved that encoder-only (i.e. unmasked) self-attention with (post) LayerNorm leads to tokens clustering to a single point, in the limit of number of layers going to infinity. This phenomenon is also known as \emph{consensus} in the related literature of multi-agent systems~\cite{markdahl, boumal} and Kuramoto oscillators~\cite{strogatz2000kuramoto,abdalla2022expander}.
Work~\cite{mathpersp23} in turn expands on the original perspective brought forward by~\citet{sander2022sinkformers} that identify the self-attention layer as a measure-to-measure map, see also~\cite{baratin}. More recently, \cite{CasAblPey23} studied the smoothness of this map in a framework that also covers causal attention. This work introduces a clever reparametrization that allows them to recast causal attention as mean-field dynamics, akin to their full attention counterpart. Using various approximations,~\citet{cowsik2024geometric} were able to study a more realistic architecture that also includes MLP layers and produce accurate predictions for the final configuration of particles. This setup was further investigated by~\citet{agrachev2024generic} from a geometric control perspective.  
We note also that clustering in the absence of a residual connection (replace $\dot x_k(t)$ with $x_k(t+1)$ in~\eqref{eqn:sa}) was established in~\citet{WuAjoWu23}. Additional effects of the residual connection are studied in~\cite{dong21a} and~\cite{zhang2024residual}.


\section{Causal attention}
\label{sec:causal_attention}

Before describing our model of causal attention dynamics, we review the idea of~\citet{mathpersp23}
for modeling the \emph{full attention} dynamics. In that work, the evolution of representations of
tokens through
the layers is modeled as a system of $n$ coupled Ordinary
Differential Equations (ODEs) describing dynamics of a system of particles $x_1(t), \ldots,
x_n(t)$. A brief part of their derivation of the dynamics from the transformers architecture is written in Section~\ref{ap:ode}. The particle position $x_k(t)$ corresponds to representation of the $k$-th \emph{token} at
layer $t$ (where for convenience, $t$ is allowed to take non-integer values) and due to
\textit{RMSNorm} the particles are forced to live on a unit sphere $\mathbb{S}^{d-1}$. (RMSNorm layer usually also includes a multiplication by a trainable diagonal matrix $D$, but the effect of this step can be equivalently achieved by multiplying $K,Q,V$ matrices by $D$.) These ODEs are parametrized by three matrices, known as the query  $Q$, the key $K$ and the value $V$, respectively, and that are assumed to be square $d \times d$ matrices. More specifically, token $k$ evolves according to
\begin{equation}
\label{eqn:sa}
\tag{SA}
\dot{x}_{k}(t) = \mathbf{P}_{x_{k}(t)}\Big(\frac{1}{Z_{k}(t)} \sum_{j=1}^n e^{\beta \langle Q x_{k}(t), K x_{j}(t)\rangle}V x_{j}(t)\Big),
\end{equation}
where $\mathbf{P}_x y = y - \frac{\langle x, y \rangle x}{|x|^2}$ is the projection onto the tangent space of $\mathbb{S}^{d-1}$ at $x$, and
\[
Z_{k}(t) = \sum_{j=1}^n e^{\beta \langle Q x_{k}(t), K x_{j}(t)\rangle}
\]
is a normalizing factor. Note that the dynamics of the $k$-th token depend on the positions of
\emph{all tokens}  $j \in [n]$, which is a landmark characteristic of full attention leading to the so-called \textit{mean-field dynamics} studied in~\cite{mathpersp23}; see also~\cite{clusters24, CasAblPey23, trelat}. 

In this work we focus on \emph{causal attention}, where the dynamics of token $k$
depend only on the position of tokens $j \le k$. As described in the introduction, this modification is by now the dominant type of transformer architecture in generative AI.
To reflect causal masking, we modify the ODE governing the dynamics of token $k$ as follows:
\begin{equation}
\label{eqn:csa}
\tag{CSA}
\dot{x}_{k}(t) = \mathbf{P}_{x_{k}(t)}\Big(\frac{1}{Z_{k}(t)} \sum_{j=1}^k e^{\beta \langle Q x_{k}(t), K x_{j}(t)\rangle}V x_{j}(t)\Big),
\end{equation}
where the normalizing factor $Z_k(t)$ is naturally updated to
\[
Z_{k}(t) = \sum_{j=1}^k e^{\beta \langle Q x_{k}(t), K x_{j}(t)\rangle}.
\]

\section{Single token dynamics}
Note that in~\eqref{eqn:csa} dynamics, the first token is evolving fully autonomously without the influence of others. Thus, we start from the description of its evolution. It will also guide our understanding of the dynamics of subsequent tokens. The first token moves according to the equation
\[
\dot x(t) = \mathbf{P}_{x(t)} (V x(t)).
\]

To state its behavior for any matrix $V$, we need a few definitions. Denote $\lambda_{\max}$ as the largest real part of all the eigenvalues of $V$. Let $L'$ be the span of all generalized eigenvectors of $V$ associated to eigenvalues(potentially different) with their real part equal to $\lambda_{\max}$. Let $L \subset L'$ be the subspace generated by only the eigenvectors in $L'$ with the largest corresponding Jordan block (the vectors might correspond to different blocks and even to different eigenvalues). 

\begin{lemma}\label{lemma1}
Let $x(t)$ be a solution of an ODE $\dot{x}(t) = \mathbf{P}_{x(t)}(Vx(t))$ defined on the unit sphere $\mathbb{S}^{d-1}$. Then,  for almost every initial value $x(0) \in \mathbb{S}^{d-1}$, there exists $C,c>0$ such that the following convergence rates for the geodesic distance $\textsf{dist}$ hold:
\begin{itemize}
\item[(i)] Exponential convergence to $L'$: $\textsf{dist}(x(t), L' \cap \mathbb{S}^{d-1}) \leq Ce^{-ct}$, and 
\item[(ii)]linear convergence to $L$: $\textsf{dist}(x(t), L \cap \mathbb{S}^{d-1}) \leq ct^{-1}$ 
\end{itemize}
\end{lemma}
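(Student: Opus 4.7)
The plan is to reduce this ODE on the sphere to the linear equation $\dot{w} = Vw$ via the matrix exponential, and then read off the asymptotic direction of $x(t)$ from the Jordan normal form of $V$.

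First, I would verify that $\tilde{x}(t) = e^{tV}x(0)/|e^{tV}x(0)|$ solves the ODE. Setting $w(t) = e^{tV}x(0)$, the invertibility of $e^{tV}$ ensures $w(t)$ never vanishes; then $\dot{w} = Vw$ and differentiating $\tilde{x}(t) = w(t)/|w(t)|$ gives $\dot{\tilde{x}} = V\tilde{x} - \langle\tilde{x},V\tilde{x}\rangle\tilde{x} = \mathbf{P}_{\tilde{x}}(V\tilde{x})$, so by uniqueness $x(t) = \tilde{x}(t)$.

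Next, let $m$ denote the maximum size of a Jordan block of $V$ among eigenvalues with $\mathrm{Re}(\lambda) = \lambda_{\max}$. Applying $e^{tV}$ on each generalized eigenspace produces a polynomial-in-$t$ prefactor times $e^{\lambda t}$ whose leading term in $t$ lies in the true eigenspace of $\lambda$. Collecting the contributions from eigenvalues of maximum real part and maximum block size, I would write
\[
w(t) \;=\; \frac{t^{m-1}e^{\lambda_{\max}t}}{(m-1)!}\,u(t)\;+\;\rho_{L'}(t)\;+\;\rho_\perp(t),
\]
where $u(t) = \sum_{\mathrm{Re}(\lambda)=\lambda_{\max}} e^{i\,\mathrm{Im}(\lambda)\,t}\,v_\lambda$ is an almost-periodic $L$-valued function with $v_\lambda$ depending linearly on $x(0)$; $\rho_{L'}(t)\in L'$ satisfies $|\rho_{L'}(t)|\leq C t^{m-2}e^{\lambda_{\max}t}$; and $|\rho_\perp(t)|\leq C t^{d-1}e^{(\lambda_{\max}-\varepsilon)t}$ for some $\varepsilon>0$. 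For almost every $x(0)$ at least one $v_\lambda$ is nonzero, and using the real Jordan form (on which each invariant subspace corresponding to a conjugate pair $\alpha\pm i\beta$ decomposes as $e^{\alpha t}$ times isotropic rotations $R(\beta t)$), a short almost-periodic argument shows $|u(t)|\geq c_0>0$ uniformly for $t$ large, outside a further measure-zero set of initial data.

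With this lower bound, dividing by $|w(t)|$ yields
\[
x(t) \;=\; \frac{u(t)}{|u(t)|}\;+\;O(t^{-1})\;+\;O(e^{-ct}),
\]
where $u(t)/|u(t)|\in L\subseteq L'$. Since the geodesic distance on $\mathbb{S}^{d-1}$ is comparable to the Euclidean distance near $L'\cap\mathbb{S}^{d-1}$, claim (i) follows from the exponential error $O(e^{-ct})$ contributed by $\rho_\perp$, and claim (ii) follows from the $O(t^{-1})$ error contributed by $\rho_{L'}$ (which lies in $L'$ but generally not in $L$).

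The main obstacle is the uniform lower bound $|u(t)|\geq c_0>0$: without it, the displayed estimates would only hold along a sequence of times, not at every sufficiently large $t$. The almost-every assumption on $x(0)$ is exactly what excludes degenerate initial data for which the oscillating contributions from different $\lambda$ with $\mathrm{Re}(\lambda)=\lambda_{\max}$ conspire to cancel $u(t)$; this exceptional set is contained in a finite union of proper linear subspaces and hence has measure zero. Everything else in the argument is bookkeeping with the Jordan decomposition.
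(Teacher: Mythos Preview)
Your proposal is correct and follows essentially the same route as the paper: both reduce the spherical ODE to the linear system $\dot{w}=Vw$ by showing $x(t)=e^{tV}x(0)/|e^{tV}x(0)|$, then read off the asymptotics from the Jordan decomposition, with the $e^{\lambda_{\max}t}$ terms controlling convergence to $L'$ and the $t^{m-1}$ leading polynomial factor controlling convergence to $L$. You are in fact more careful than the paper about the uniform lower bound $|u(t)|\geq c_0$; the paper simply notes that for almost every $x(0)$ all Jordan coefficients are nonzero and leaves the rest implicit, whereas you correctly identify this as the only genuine subtlety and sketch why it holds via the real Jordan form (in that basis each complex-pair block acts by a rotation, so the leading term has constant norm in Jordan coordinates and hence norm bounded away from zero in the original coordinates).
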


This result can be derived from standard results on the theory of linear ODEs (proof in Section~\ref{sec:proof_lem1}). 
We note that this result is important for other tokens as well. Indeed, for every token $x_k$, the contribution to $\dot x_k$ in~\eqref{eqn:csa} from the term with $j=k$ often has the biggest weight, an effect amplified by large $\beta$.

In general, eigenvectors corresponding to a real eigenvalue $\lambda = \lambda_{\max}$ create a fixed set in $L \cap \mathbb{S}^{d-1}$, while the complex eigenvalues with the largest real part produce a limit torus in $L \cap \mathbb{S}^{d-1}$. In what follows, we only consider the case where the eigenvalue with the largest real part is real itself and it only has Jordan blocks of size $1$. Then, $L = L'$ and convergence to $L$ is exponentially fast. Note also that when $\dim L = 1$, we have $L\cap \mathbb{S}^1 = \{\pm \xi\}$ for some unit vector $\xi$. In this case, $x(t) \to \pm \xi$ as $t \to \infty$, again with exponential speed. These observations will be important for the next section, when we describe asymptotic configurations of tokens.

\section{Final Configuration}
\label{sec:final_config}

The system of $n$ tokens that we are studying is far more complicated than for a single token. Even establishing convergence to \textit{some} point as $t\to \infty$ is challenging. In~\cite{mathpersp23}, similar models were analyzed analytically by noticing that the dynamical system has the structure of the gradient flow of some potential function:
\[
\dot{x}(t) = \nabla H(x)\,.
\]
For such systems, groundbreaking results of 
\citet{Lojasiewicz63, Lojasiewicz65, Lojasiewicz84} (see  \citet{Haraux12} for a self-contained overview) guarantee convergence to a critical point of $H$ assuming it is real-analytic.

However, our system~\eqref{eqn:csa} does not have a gradient-flow structure and thus techniques of \L{}ojasiewicz are not applicable. On the other hand, we have a significant advantage in the hierarchical structure of our system, allowing us to study tokens sequentially. 

We have already understood the evolution of the first token. In this section, we do two things. First, we  describe, based on our analytical and numerical insights, conjectures about the asymptotic configuration $x(t)$ for $t\to \infty$. The surprising result here is that only the spectral properties of $V$ (and not $K$ or $Q$) affect asymptotics. Second, we rigorously prove convergence to a single point for the special case of $V=I_d$. We note that unlike the proof in~\cite{mathpersp23} (see also~\cite{markdahl, boumal}), our result works for all  $K$ and $Q$ matrices, while the proof in~\cite{mathpersp23} works only for $Q^\top K = V$ and~\cite{markdahl, boumal} is restricted to $Q^\top K = I_d$.

Our main insight is that there are two major forces that drive each token: its internal force which is described by Lemma~\ref{lemma1}, and the external force induced by all the particles preceding it, which is either attractive or repulsive depending on the sign of the top eigenvalue(s) of $V$. The balance between the two forces is defined via attention. 

To get a better grasp of how the external force works, we consider the case where the first (internal) force vanishes, that is, $V = I_d$. In this case, the tokens collapse asymptotically to a single point.

\begin{theorem}
\label{thm1} Let $V = I_d$ and $Q, K$ be arbitrary matrices. Then, for almost any starting point $(x_1(0), \ldots, x_n(0))$ with respect to the volume measure on $(\mathbb{S}^{d-1})^n$, the causal transformer dynamics~\eqref{eqn:csa} converge to a single cluster:
\[
\forall \, k \in [n], \ \lim_{t \to \infty} x_k(t) = x_1(0).
\]
\end{theorem}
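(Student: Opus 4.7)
The plan is induction on the token index $k$, with target $x_* := x_1(0)$. The base case $k=1$ is immediate: since $V = I_d$ and $\mathbf{P}_{x_1}(x_1)=0$, we get $\dot x_1 \equiv 0$, so $x_1(t) \equiv x_*$. For the inductive step I would strengthen the hypothesis to \emph{exponential} convergence and assume $\textsf{dist}(x_j(t), x_*) \le C_j e^{-c_j t}$ for every $j<k$; the goal is to establish the same for $x_k$ for Lebesgue-almost every $x_k(0) \in \mathbb{S}^{d-1}$. By Fubini on $(\mathbb{S}^{d-1})^n$ this closes the induction and yields the theorem.

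The central quantity to track is $f_k(t) := \langle x_*, x_k(t)\rangle$. Pairing \eqref{eqn:csa} against $x_*$ and using $\mathbf{P}_{x_k}(x_k) = 0$ together with the substitutions $\langle x_*, x_j(t)\rangle = 1 - \eta_j(t)$ and $\langle x_j(t), x_k(t)\rangle = f_k(t) + \delta_j(t)$, with $\eta_j, \delta_j$ exponentially small by induction, one derives
\[
\dot f_k(t) \;=\; (1 - w_k(t))\,(1 - f_k^2(t)) \;+\; E(t), \qquad |E(t)| \le C e^{-ct}.
\]
The uniform bound $|\langle Qx_k, Kx_j\rangle| \le M := \|Q\|\|K\|$ confines each weight to $[e^{-2\beta M}/k,\, e^{2\beta M}/k]$, so $1 - w_k(t) = \sum_{j<k} w_j(t) \ge \alpha_0 := (k-1)e^{-2\beta M}/k > 0$ uniformly in $t$. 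The autonomous limit field $F_\infty(x_k) = \alpha(x_k)\,\mathbf{P}_{x_k}(x_*)$ with $\alpha(x_k)>0$ is a strictly positive reparametrization of the Riemannian gradient on $\mathbb{S}^{d-1}$ of the Morse function $x_k \mapsto \langle x_*, x_k\rangle$; its only equilibria are $\pm x_*$, with $+x_*$ a global attractor and $-x_*$ a hyperbolic source whose tangent-space linearization is $\alpha(-x_*)\,\mathrm{Id}$. Combining this picture with the ODE above, asymptotically-autonomous theory (Markus--Thieme) gives the dichotomy $f_k(t) \to +1$ or $f_k(t) \to -1$.

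To rule out convergence to $-x_*$ on a positive-measure set, let $g(t) := 1 + f_k(t) \ge 0$. Near $g = 0$ the ODE linearizes to $\dot g \ge 2\alpha(-x_*)\, g - |E(t)|$, and a Gronwall comparison shows that whenever $g(T) > C' e^{-cT}$ at some large $T$, $g$ grows exponentially and exits the linearization neighborhood in bounded time, contradicting $g \to 0$. Hence on the bad set $\textsf{dist}(x_k(T), -x_*) \le C'' e^{-cT/2}$ for all large $T$. Since the time-$T$ flow of the $x_k$ equation is a $C^1$ diffeomorphism of $\mathbb{S}^{d-1}$ expanding volumes near $-x_*$ at rate $e^{(d-1)\alpha(-x_*) T}$, the preimage of the ball $B_{C'' e^{-cT/2}}(-x_*)$ has volume $O\!\bigl(e^{-(d-1)(\alpha(-x_*) + c/2) T}\bigr) \to 0$, so the bad set is contained in a nested family of sets of vanishing measure, hence is null. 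Once $f_k(t) \to 1$, the elementary inequality $1 - f_k^2 \ge 1 - f_k$ for $f_k \ge 0$ reduces the ODE to $\frac{d}{dt}(1 - f_k) \le -\alpha_0(1 - f_k) + |E(t)|$, and Gronwall delivers $1 - f_k(t) \le C_k e^{-c_k t}$ with $c_k = \min(\alpha_0, c)$, propagating the exponential-convergence hypothesis to index $k$.

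The main obstacle is the quantitative measure-zero estimate at the hyperbolic source $-x_*$. The system is not a gradient flow (precluding the \L{}ojasiewicz approach of~\cite{mathpersp23}) and not autonomous, and the perturbation $E(t)$ is not an external small parameter but a feedback generated by the induction hypothesis itself, so constants must be tracked with care. I expect the Gronwall-plus-Jacobian argument above to succeed; a more abstract alternative is a stable-manifold or Hartman--Grobman theorem for asymptotically autonomous flows, which identifies the stable set of $-x_*$ as a $0$-dimensional Lipschitz manifold and is again of measure zero.
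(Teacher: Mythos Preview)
Your inductive skeleton, the scalar ODE for $f_k=\langle x_*,x_k\rangle$, the dichotomy $f_k\to\pm1$, and the identification of $-x_*$ as the sole obstruction all match the paper's proof. The essential difference is in how the bad equilibrium is excluded. You fix $x_1(0),\dots,x_{k-1}(0)$ and treat the $x_k$-equation as a \emph{non-autonomous} ODE on $\mathbb{S}^{d-1}$ that is only asymptotically autonomous, and then mount a quantitative Gronwall-plus-Jacobian argument. The paper instead observes that the \emph{full} system on $(\mathbb{S}^{d-1})^k$ is already autonomous: the Jacobian of the joint vector field is block lower-triangular, and at the critical point $(\xi,\dots,\xi,-\xi)$ the diagonal block $\partial f_k/\partial x_k$ equals $w(\xi,-\xi)\,(\mathbf I-\xi\xi^\top)$ with $w(\xi,-\xi)>0$, so the standard center-stable manifold theorem immediately gives a measure-zero stable set. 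This bypasses asymptotically-autonomous machinery and lets the paper run the induction without tracking any exponential rates---it only needs the qualitative fact that $\langle x_j,\xi\rangle>1-\eps$ eventually, plus the elementary inequality $\langle x_j,\xi\rangle\ge\langle x_j,x_k\rangle\langle x_k,\xi\rangle$ (their Lemma~\ref{lemma:scalar}) to force monotonicity of $f_k$.

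Your route should also succeed, but the volume estimate as written is incomplete: the claim that the time-$T$ map expands volumes at rate $e^{(d-1)\alpha(-x_*)T}$ is only valid along trajectories that stay in a neighborhood $U$ of $-x_*$ for the whole interval $[0,T]$, and points in $\Phi_T^{-1}(B_{C''e^{-cT/2}}(-x_*))$ need not do so. A clean fix is to stratify the bad set as $B=\bigcup_m B_m$ with $B_m=\{x_k(0):x_k(t)\in U\text{ for all }t\ge m\}$; on $B_m$ your Gronwall bound $g(T)\le C'e^{-cT}$ holds for $T\ge m$ with a uniform constant, the Jacobian of $\Phi_{m\to T}$ is $\gtrsim e^{(d-1)\alpha'(T-m)}$ since the trajectory remains in $U$, and a crude Lipschitz bound controls $\Phi_{0\to m}$. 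Sending $T\to\infty$ gives $\mathrm{vol}(B_m)=0$ for each $m$. Alternatively, the ``more abstract alternative'' you mention at the end is exactly what the paper does, except that no asymptotically-autonomous version is needed once you work on the product manifold.
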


We prove this result in Section~\ref{sec:proof_thm1}. In the proof, weight functions are only required to be positive and continuously differentiable ($C^1$). This ambiguity suggests that incorporating time-dependence of $Q$ and $K$ might not alter the theorem's validity, but it significantly adds complexity to the proof in dealing with non-autonomous systems.

Steps similar to our proof of Theorem~\ref{thm1} can be followed to study the more general case of the matrix $V \neq I_d$. Unfortunately, one runs into multiple technical issues with application of the stable-manifold theorem from dynamical systems due to the emergence of critical manifolds (as opposed to critical points in the $V = I_d$) case. Thus, we leave the general case at the status of conjectures, which we describe next.

In what follows, we denote the eigenvalue of $V$ with the largest real part as $\lambda_{\max}$ and assume that it is real. If it is not, the limiting configuration is additionally rotating with a constant speed, which complicates the discussion and so is omitted. 

Let $L$ denote the eigenspace of $\lambda_{\max}$. If $\lambda_{\max}$ has multiplicity 1, then we denote the corresponding unit eigenvectors as $\pm \xi$. For simplicity we assume that $\lambda_{\max}$ has all of the corresponding Jordan blocks of size 1.

First of all, if $\dim L = 1$ then, according to Lemma~\ref{lemma1}, every token is driven towards $\xi$ or $-\xi$ by their own force. Moreover, for $\lambda_{\max} > 0$ the force of other tokens is attractive, while for $\lambda_{\max} < 0$ it is repulsive.

Thus, for $\lambda_{\max} > 0$ all the particles collapse into $\xi$ and $-\xi$, whereas for $\lambda_{\max} < 0$ the repulsion force prevents the particles from going all the way to $\pm \xi$ and instead the particles stabilize at two clouds around $\xi$ and $-\xi$. This behavior is captured in Figures~\ref{fig:V100} and~\ref{fig:V_-122}\footnote{The spread of the clouds depends on the relative importance of each token's own attention, that differs with various $K, Q$. There are choices of $K$ and $Q$ that result in complex interactions without structure. For example, when  $Q^\top  K = [[0, -1, 0], [1, 0, 0], [0, 0, 1]], V = - I_3$}. For the case $\lambda_{\max} > 0$ we formally express it as:
\begin{conjecture}
\label{thm1.5}
Let $Q, K$ be arbitrary matrices and $V$ be diagonalizable with $d$ different positive real eigenvalues. Denote the largest eigenvalue as $\lambda_{\max}$ and unit vector $\xi: V\xi = \lambda_{\max} \xi$. Then, for almost any starting point $(x_1(0), \ldots, x_n(0))$ with respect to the volume measure on $(\mathbb{S}^{d-1})^n$, the causal transformer dynamics~\eqref{eqn:csa} converge to two clusters
\[
\forall \, k \in [n], \ \lim_{t \to \infty} x_k(t) \in \{\xi, -\xi\}.
\]
\end{conjecture}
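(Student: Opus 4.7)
The plan is to proceed by induction on the token index $k$, mirroring the hierarchical strategy behind Theorem~\ref{thm1}. For the base case $k=1$, the first token obeys the single-particle equation $\dot x_1 = \mathbf{P}_{x_1}(Vx_1)$, so Lemma~\ref{lemma1} applies directly: because $\lambda_{\max}$ is simple, real, and positive, $L\cap\mathbb{S}^{d-1}=\{\pm\xi\}$, hence $x_1(t)\to\sigma_1\xi$ exponentially fast for some $\sigma_1\in\{\pm 1\}$ and almost every $x_1(0)$.

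For the inductive step, suppose $x_j(t)=\sigma_j\xi+O(e^{-ct})$ for every $j<k$. Substituting these asymptotics into~\eqref{eqn:csa} and using $V(\sigma_j\xi)=\sigma_j\lambda_{\max}\xi$, the $j<k$ summands collapse, up to an exponentially decaying remainder, into a single $\xi$-aligned contribution, so the equation for $x_k$ reduces to
\[
\dot x_k = \mathbf{P}_{x_k}\!\bigl(a_k(x_k)\,\xi + b_k(x_k)\,Vx_k\bigr) + O(e^{-ct}),
\]
where $b_k(x)>0$ is the normalized self-attention weight and $a_k(x)\in\mathbb{R}$ aggregates the cluster contributions. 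This is an asymptotically autonomous perturbation of the limit vector field $F_k(x):=\mathbf{P}_x\bigl(a_k(x)\,\xi+b_k(x)\,Vx\bigr)$.

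I would analyze $F_k$ via the functional $\phi(x)=\langle x,\eta\rangle\,\langle x,\xi\rangle$, where $\eta$ is the left eigenvector of $V$ corresponding to $\lambda_{\max}$, normalized by $\langle\eta,\xi\rangle=1$. Exploiting $V\xi=\lambda_{\max}\xi$ and $V^\top\eta=\lambda_{\max}\eta$ together with the spectral gap between $\lambda_{\max}$ and the remaining eigenvalues of $V$, a direct computation shows that $\phi^2$ is strictly monotone along trajectories of $F_k$ outside $\{\pm\xi\}\cup(\xi^\perp\cap\mathbb{S}^{d-1})$. Consequently the $\omega$-limit set of $F_k$ on the sphere is contained in this exceptional set, with $\pm\xi$ linearly attracting and the equator normally hyperbolic with unstable direction $\xi$. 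Standard Markus--Thieme theory for asymptotically autonomous systems then transfers this description to the genuine non-autonomous flow of $x_k$: its $\omega$-limit set sits inside that of $F_k$, and linear attraction at the poles upgrades accumulation to convergence $x_k(t)\to\sigma_k\xi$ for every trajectory that stays away from the equator, closing the induction.

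\textbf{Main obstacle.} The delicate point is the measure-theoretic exclusion of trajectories captured by the equator. In the $V=I_d$ setting of Theorem~\ref{thm1}, the equator carries only finitely many critical points and the stable-manifold theorem suffices; here, however, $\xi^\perp\cap\mathbb{S}^{d-1}$ contains an entire invariant critical submanifold generated by the lower eigendirections of $V$, and the basin of this submanifold under the non-autonomous coupled dynamics is not obviously negligible. One would need to construct a codimension-one time-dependent Perron--Hadamard stable manifold for the $x_k$ flow and combine it with an iterated Fubini argument over $(x_1(0),\ldots,x_{k-1}(0))$ to conclude that the combined exceptional set has volume zero in $(\mathbb{S}^{d-1})^n$. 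Carrying out this non-autonomous invariant-manifold construction uniformly in the coupled history of all prior tokens is precisely the technical gap that, as the authors note, prevents upgrading Conjecture~\ref{thm1.5} to a theorem in the present paper.
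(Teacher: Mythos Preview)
The paper does not prove this statement: it is explicitly labeled a \emph{Conjecture}, and the surrounding text in Section~\ref{sec:final_config} says only that ``steps similar to our proof of Theorem~\ref{thm1} can be followed'' but that one ``runs into multiple technical issues with application of the stable-manifold theorem from dynamical systems due to the emergence of critical manifolds (as opposed to critical points in the $V=I_d$ case).'' There is therefore no paper proof to compare against; your proposal is a sketch of how one might try to attack the open problem, and you correctly recognize this in your final paragraph.

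Your overall strategy---induction on $k$, Lemma~\ref{lemma1} for the base case, reduction to an asymptotically autonomous limit field, and identification of the equatorial set $\xi^\perp\cap\mathbb{S}^{d-1}$ as a critical \emph{submanifold} rather than a finite set---matches the paper's own diagnosis of where the argument breaks. The obstacle you name (building a non-autonomous codimension-one stable manifold uniformly in the history of $x_1,\ldots,x_{k-1}$) is precisely the ``critical manifolds'' issue the authors flag.

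That said, a couple of intermediate claims in your sketch are themselves not established and would need work even before reaching the acknowledged obstacle. First, the asserted strict monotonicity of $\phi^2$ along $F_k$ is not a ``direct computation'': the aggregate coefficient $a_k(x)$ can change sign as $x_k$ moves (it is a signed combination of the $\sigma_j$'s weighted by attention), so the $a_k(x)\,\xi$ term can compete with the $b_k(x)Vx$ drift rather than reinforce it, and your Lyapunov candidate need not be monotone. Second, Markus--Thieme theory transfers $\omega$-limit sets to chain-recurrent sets of the limit system, not just to its equilibria; since the equatorial submanifold is invariant and may support nontrivial internal dynamics of $F_k$, you would also need to rule out more complicated limit behavior there, not merely exclude convergence to individual equatorial points.
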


If $\lambda_{\max}$ has multiplicity at least $2$, then from Lemma~\ref{lemma1} each token internally gets attracted by the eigenspace $L$. When tokens are close to $L$, the action of $V$ becomes close to $\lambda_{\max} I_d$, which for $\lambda_{\max} > 0$ according to Theorem~\ref{thm1} forces tokens to collapse to a singleton, while for $\lambda_{\max} < 0$ other tokens exude a repelling force, causing particles to spread out around $L$. This behavior is captured in Figures~\ref{fig:V110} and ~\ref{fig:V_-112,2}. For the case $\lambda_{\max} > 0$ we formalize it as follows:
\begin{conjecture}
\label{thm2} Let $Q, K$ be arbitrary matrices and $V$ be any matrix such that its largest eigenvalue $\lambda_{\max} > 0$ is real and has an eigenspace $L$ of dimension $\dim L \geq 2$, while for any $z\in L^{\perp}$ one has $Vz \in L^{\perp}$ with $\la Vz, z \ra < \lambda_{\max} |z|^2$. Then, for almost any initialization $(x_1(0), \ldots, x_n(0))$, the causal attention dynamics~\eqref{eqn:csa} converge to one cluster. More specifically, if we define $\xi$ as the normalized $L$-component of $x_1(0)$, i.e., for $y_1 := \mathbf{P}_{L^{\perp}}(x_1(0))$, $\xi := y_1/|y_1|$, then 
\[
\forall \, k \in [n], \ \lim_{t \to \infty} x_k(t) = \xi.
\]
\end{conjecture}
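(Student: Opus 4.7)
The plan is to prove Conjecture~\ref{thm2} by induction on the token index $k$, exploiting the hierarchical structure of causal attention and the $V$-invariance of the orthogonal decomposition $\mathbb{R}^d = L \oplus L^{\perp}$ supplied by the hypothesis. The base case $k=1$ follows immediately from Lemma~\ref{lemma1} applied to $\dot x_1 = \mathbf{P}_{x_1}(Vx_1)$: the set $L\cap \mathbb{S}^{d-1}$ is the exponentially stable attractor of the single-token dynamics, and since $V|_L = \lambda_{\max} I_L$ forces $\mathbf{P}_x(Vx)\equiv 0$ on $L\cap \mathbb{S}^{d-1}$, the limit direction inside $L$ is preserved and coincides with $\xi$ (the normalized $L$-component of $x_1(0)$) for a.e.\ initialization.

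For the inductive step, assume $x_j(t)\to \xi$ for all $j<k$, and decompose $x_k = y_k + z_k$ with $y_k\in L$, $z_k\in L^{\perp}$. Write $w_k(t) = \tfrac{1}{Z_k(t)}\sum_{j\le k} e^{\beta\langle Qx_k, Kx_j\rangle} V x_j$ so that $\dot x_k = \mathbf{P}_{x_k} w_k$ and $\dot z_k = \mathbf{P}_{L^{\perp}} w_k - \langle x_k, w_k\rangle z_k$. Using $V$-invariance of $L,L^{\perp}$ together with $Vx_j(t)\to \lambda_{\max}\xi\in L$ for $j<k$,
\[
\mathbf{P}_{L^{\perp}} w_k(t) = \frac{e^{\beta\langle Qx_k, Kx_k\rangle}}{Z_k(t)}\, Vz_k + o(1).
\]
Differentiating $|z_k|^2$ and substituting $|y_k|^2 + |z_k|^2 = 1$, the spectral-gap bound $\langle Vz,z\rangle \le (\lambda_{\max}-\eps)|z|^2$ on $L^{\perp}$ (available by compactness from the strict inequality in the hypothesis) produces a principal dissipative contribution of order $-\eps|z_k|^2|y_k|^2$, together with a lower-order term proportional to $\langle y_k, \xi\rangle|z_k|^2$. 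Provided $\langle y_k(t), \xi\rangle$ is eventually non-negative, this yields $|z_k(t)|\to 0$ exponentially fast, i.e.\ transverse collapse toward $L\cap \mathbb{S}^{d-1}$.

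Once $z_k$ is small, $x_k$ lies in a tubular neighborhood of $L\cap \mathbb{S}^{d-1}$ on which the self-term $\mathbf{P}_{x_k}(Vx_k)$ vanishes to leading order (because $V|_L$ is scalar). The residual dynamics of $y_k/|y_k|$ on $L\cap \mathbb{S}^{\dim L - 1}$ are therefore driven purely by the external force $\tfrac{\lambda_{\max}\xi}{Z_k(t)}\sum_{j<k} e^{\beta\langle Qx_k, K\xi\rangle}$ projected onto $T_{y_k}(L\cap \mathbb{S}^{d-1})$. This vector field is gradient-like on $L\cap \mathbb{S}^{\dim L - 1}$ with $\xi$ as the unique attractor and $-\xi$ as the unique repeller, so a Lyapunov argument using $1-\langle y_k/|y_k|,\xi\rangle$ combined with a non-autonomous stable-manifold statement handling the time-dependent perturbation from the inductive hypothesis gives $y_k(t)/|y_k(t)|\to \xi$, and hence $x_k(t)\to \xi$, away from a measure-zero exceptional set.

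\textbf{Main obstacle.} The central difficulty, flagged by the authors, is that $L\cap \mathbb{S}^{d-1}$ is a critical \emph{manifold} of the autonomous self-dynamics rather than an isolated critical point, so the standard stable-manifold theorem does not close the argument by itself: along $L$ the self-term provides zero restoring force, and convergence within $L$ must be extracted from the comparatively weak, time-dependent attraction of the previous tokens. Moreover, transverse collapse (Step~A) and in-$L$ alignment (Step~B) are coupled through the sign of $\langle y_k, \xi\rangle$, so they must be closed simultaneously. Resolving this seems to require a center-manifold reduction to the slow dynamics on $L\cap \mathbb{S}^{\dim L - 1}$ together with a careful characterization of the measure-zero set of initial conditions to exclude --- those draining into the repeller $-\xi$ via its stable manifold, plus the degenerate slice $\mathbf{P}_L(x_1(0))=0$ --- which is the main source of the gap between Theorem~\ref{thm1} and the full conjecture.
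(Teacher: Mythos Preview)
The paper does not prove this statement: it is explicitly labeled a \emph{conjecture}, and the surrounding text says that while ``steps similar to our proof of Theorem~\ref{thm1} can be followed,'' one ``runs into multiple technical issues with application of the stable-manifold theorem from dynamical systems due to the emergence of critical manifolds (as opposed to critical points in the $V = I_d$ case),'' so the claim is left open. There is therefore no proof in the paper to compare against.

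Your outline is consistent with the heuristic the paper offers (Lemma~\ref{lemma1} forces collapse toward $L$, then $V|_L = \lambda_{\max} I_L$ reduces the in-$L$ motion to the Theorem~\ref{thm1} situation), and your ``Main obstacle'' paragraph pinpoints exactly the difficulty the authors flag: the self-dynamics have an entire critical manifold $L\cap \mathbb{S}^{d-1}$ rather than isolated equilibria, so the stable-manifold theorem does not close the argument by itself, and the within-$L$ convergence must be extracted from the weak, non-autonomous attraction of previous tokens while simultaneously controlling the transverse component. You are also right that the transverse-decay step (your Step~A) and the in-$L$ alignment (Step~B) are coupled through the sign of $\langle y_k,\xi\rangle$, which prevents a clean sequential argument. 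In short, your proposal is an honest sketch of a plausible route together with an accurate diagnosis of why it is not yet a proof; the paper is in the same position and does not claim otherwise.
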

(Note that $\xi$ is undefined when $x_1(0) \perp L$, but this happens with probability zero.)

An important practical observation is that these conjectures explain that $V$ performs dimensionality reduction in the following way. Tokens converge to $L \cap \mathbb{S}^{d-1}$ and, in that space, they move as if acted upon by the $\lambda I_d$ matrix on a sphere $\mathbb{S}^{\dim L - 1}$. For the pre-trained \cite{lanalbert} the spectra of value matrices is depicted in Figure~\ref{fig:spectra}. Interestingly, there are heads with negative $\lambda_{\max}$. Future work will be concerned with studying real-world matrices $V$ and connecting their top eigenspaces to semantic meaning of layers and tokens.

\section{Meta-stable clustering}
\label{sec:meta-clustering}

\begin{figure}[t]
    \centering
    \begin{tabular}{ccc}
        \begin{subfigure}{0.3\textwidth}
            \centering
            \includegraphics[width=\textwidth]{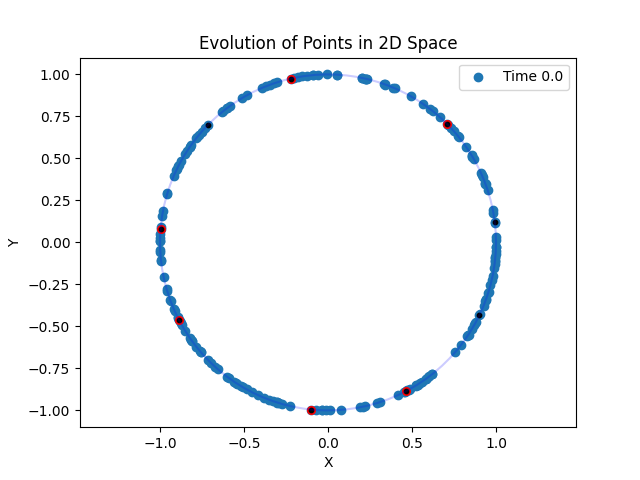}
            \caption{Time 0}
        \end{subfigure} &
        \begin{subfigure}{0.3\textwidth}
            \centering
            \includegraphics[width=\textwidth]{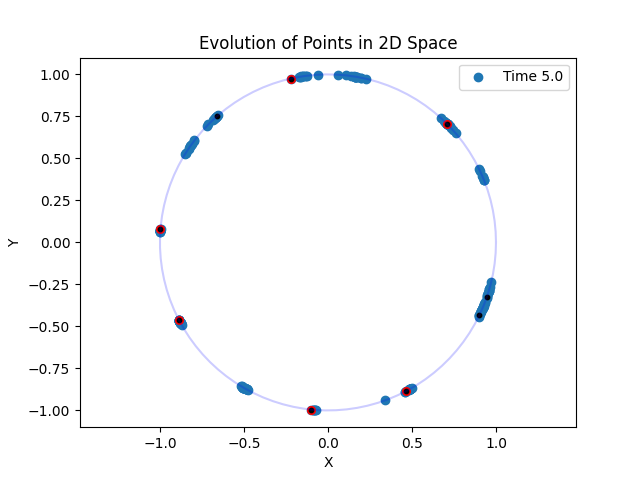}
            \caption{Time 5}
        \end{subfigure} &
        \begin{subfigure}{0.3\textwidth}
            \centering
            \includegraphics[width=\textwidth]{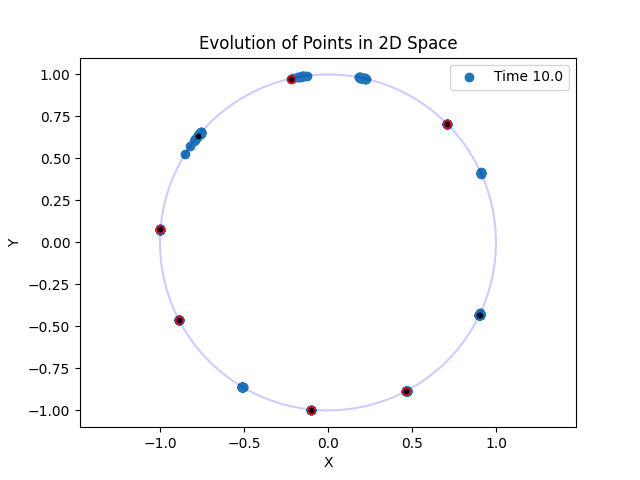}
            \caption{Time 10}
        \end{subfigure} \\
        \begin{subfigure}{0.3\textwidth}
            \centering
            \includegraphics[width=\textwidth]{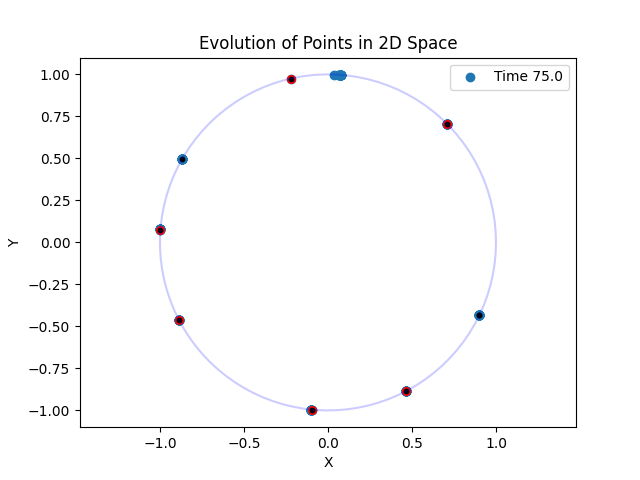}
            \caption{Time 75}
        \end{subfigure} &
        \begin{subfigure}{0.3\textwidth}
            \centering
            \includegraphics[width=\textwidth]{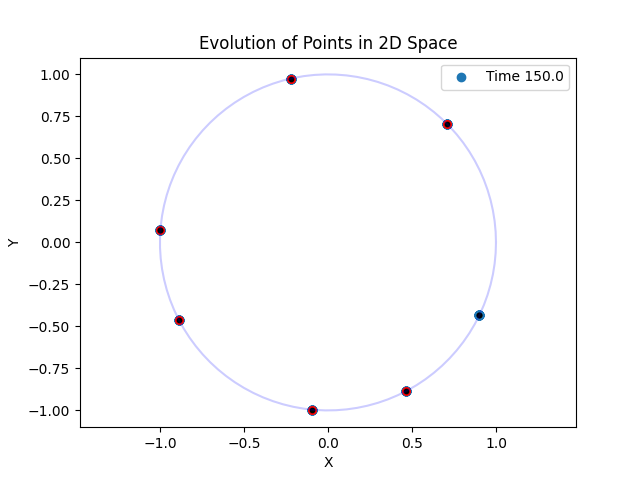}
            \caption{Time 150}
        \end{subfigure} &
        \begin{subfigure}{0.3\textwidth}
            \centering
            \includegraphics[width=\textwidth]{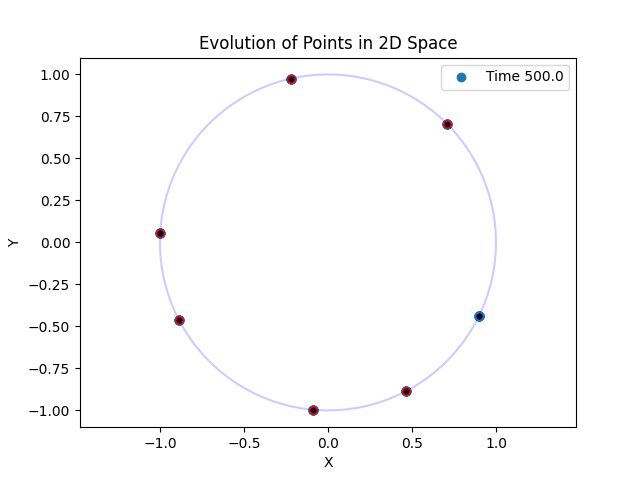}
            \caption{Time 500}
        \end{subfigure}
    \end{tabular}
    \caption{Evolution of the system~\eqref{eqn:csa} with $K=Q=V=I_2$ with $n = 200$, $d = 2$, $\beta = 64$, strong R\'enyi centers (red) and R\'enyi centers (black) with $\delta = 4\beta^{-1/2}$. Note that strong R\'enyi centers are visually stationary (as per Lemma~\ref{lemma:meta}) but do not explain all clusters. In turn, R\'enyi centers are moving and merging (one disappears between $t = 75$ and $t = 150$), but capture more meta-stable clusters. }
    \label{fig:dynamical_system_evolution}
\end{figure}

As we discussed earlier, perhaps the most fascinating discovery of~\cite{mathpersp23} is the existence
of meta-stable clusters in the full-attention dynamics. It turns out that the same phenomenon
persists in the causal-attention dynamics that we study here.

The dynamical evolution of the system is illustrated in Fig.~\ref{fig:dynamical_system_evolution}. At $t=150$, the initially uniform distribution of 200 particles consolidates into seven distinct clusters. While Theorem~\ref{thm1} establishes the eventual collapse into a single cluster, these intermediate clusters exhibit remarkable metastability, persisting with negligible movement over extended time periods---at least until $t=500$ according to Fig.~\ref{fig:dynamical_system_evolution}---before sequential merging occurs. We define these meta-stable configurations as meta-stable clusters, with three-dimensional analogues shown in Fig.~\ref{fig:V111} and~\ref{fig:V110}.


Given that the time parameter in our dynamics corresponds to network depth in transformer architectures, the meta-stable configurations, rather than final states (achieved at $t=\exp(\Omega(\sqrt{\beta}))$), hold greater practical significance. The emergence of meta-stable clustering and its associated dimensionality reduction may provide fundamental insights into transformers' capacity for generating efficient context-dependent representations.


From a theoretical perspective, understanding meta-stable clustering presents significant challenges, as traditional techniques for asymptotic analysis---such as those used in our Theorem~\ref{thm1}---prove insufficient. Recent work on full attention transformers has made partial progress in this direction. \cite{geshkovski2024dynamic} demonstrated that when self-attention dynamics approach a nearly clustered state, they will converge to a tightly clustered configuration and remain stable for an exponential time period. Complementing this, \cite{agazzi} proved that tokens initialized near a uniform distribution on the sphere will spontaneously organize into a loosely clustered state. However, the bounds on the clustering tightness in this second line of work are not sufficient to trigger the convergence conditions required by \cite{geshkovski2024dynamic}'s theorem.

This Section presents a fundamental discovery regarding the identification of cluster centers in causal-attention dynamics. We establish three key claims: 
First, we demonstrate that initialization irregularities generate distinctive tokens, termed R\'enyi parking centers, which evolve into meta-stable cluster nuclei. While this phenomenon is primarily supported by numerical evidence (Fig.~\ref{fig:meta-clustering}), it provides crucial insight into the clustering mechanism.
Second, we prove that a subset of these special tokens, called strong R\'enyi centers, maintains near-stationarity over extended time periods (Lemma~\ref{lemma:meta}). Both R\'enyi and strong R\'enyi centers occur with frequency $\Theta(\beta^{\frac{d-1}{2}})$, confirming the $\sqrt{\beta}$ scaling predicted for $d=2$ by \cite{mathpersp23}; see also~\cite{geshkovski2024dynamic, agazzi}.
Third, we establish in Theorem~\ref{thm:fixed_centers} that as $t \to \infty$, all remaining tokens will converge to the vicinity of one of these stationary tokens, completing the meta-stable clustering process.


This section restricts our analysis to the case where $V=I_d$. For general matrices $V$, our empirical observations suggest that particles rapidly converge to a lower-dimensional subspace spanned by $d_1 \ll d$ principal eigenvectors. Consequently, we conjecture that the number of meta-stable clusters should rather be $\beta^{\frac{d_1-1}{2}}$, where the ambient dimension $d$ is replaced by the effective dimension $d_1$. While a rigorous proof of this dimension-reduction remains an open problem for future investigation, this phenomenon motivates our focus on low-dimensional cases (specifically $d=2$) throughout this section.


For convenience, we also fix $Q = K = I_d$, though this condition could be easily relaxed (e.g. to
$Q^\top K = K^\top Q = V$). Under these assumptions, the system can be rewritten in polar
coordinates $x_k = [\cos(\phi_k), \sin(\phi_k)]^\top$ as
\begin{equation}
\tag{CSA-2d}
\label{eqn:csa-2d}
\dot{\phi}_k = \frac{1}{Z_{k}} \sum_{j=1}^{k} e^{\beta (\cos(\phi_k - \phi_j)-1)} \sin(\phi_j - \phi_k) = \frac{1}{Z_{k}} \sum_{j=1}^{k-1} h(\phi_j - \phi_k),
\end{equation}
with interaction potential given by
\begin{equation}
\tag{IntPot}
    \label{eq:def_h}
h(x):=e^{\beta (\cos(x) - 1)}\sin x,
\quad \text{and} \quad Z_k = \sum_{j=1}^{k} e^{\beta (\cos(\phi_k - \phi_j)-1)}.
\end{equation}

\subsection{R\'enyi Parking}

The prediction of meta-stable clustering center locations exhibits a notable connection to the R\'enyi parking problem.

Consider a sequence of tokens $(x_j)_{j\geq 1}$ on the sphere $\mathbb{S}^{d-1}$ equipped with geodesic distance $\textsf{dist}$. For a fixed separation parameter $\delta > 0$, we define:

\begin{itemize}
    \item \emph{R\'enyi centers} as the subsequence $(x_{s_j})_{j\geq 1}$, where $(s_j)_{j\geq 1}$ is strictly increasing and satisfies:
        \[\textsf{dist}(x_{s_j}, x_{s_i}) > \delta \quad \text{for all } i < j\]
    
    \item \emph{Strong R\'enyi centers} as the subsequence $(x_{s_j})_{j\geq 1}$ satisfying:
        \[\textsf{dist}(x_{s_j}, x_i) > \delta \quad \text{for all } i < s_j\]
\end{itemize}

By construction, the set of Strong R\'enyi centers forms a subset of R\'enyi centers.


As demonstrated in Section~\ref{sec:energy_function}, particles in our system exert maximal attractive force at distances of order at most $\beta^{-1/2}$, with rapid decay beyond this scale. For strong R\'enyi centers defined with separation parameter $\delta = c \beta^{-1/2}$ (where $c$ is sufficiently large), this decay ensures negligible influence from preceding particles and thus remain stable for a long time---a phenomenon formally established in Lemma~\ref{lemma:meta}. This metastability, coupled with rapid particle aggregation tokens, indicates that strong R\'enyi centers serve as primary attractors for subsequent tokens.


R\'enyi centers are unaffected by previous particles but only by previous R\'enyi centers, thereby generating new clustering centers. For fixed $\delta$, there are more R\'enyi centers than strong R\'enyi centers (see Section~\ref{sec:R\'enyi} for exact cardinality analysis). While R\'enyi centers better capture the meta-stable clustering effect, as illustrated in Figures~\ref{fig:meta-clustering} and~\ref{fig:dynamical_system_evolution}, they lack positional stability and may converge to other centers over time. Although R\'enyi centers rapidly aggregate a large fraction of particles, some of these particles continue to migrate and eventually converge to strong R\'enyi centers.


\begin{figure}[t]
    \centering
    \includegraphics[width=0.7\linewidth]{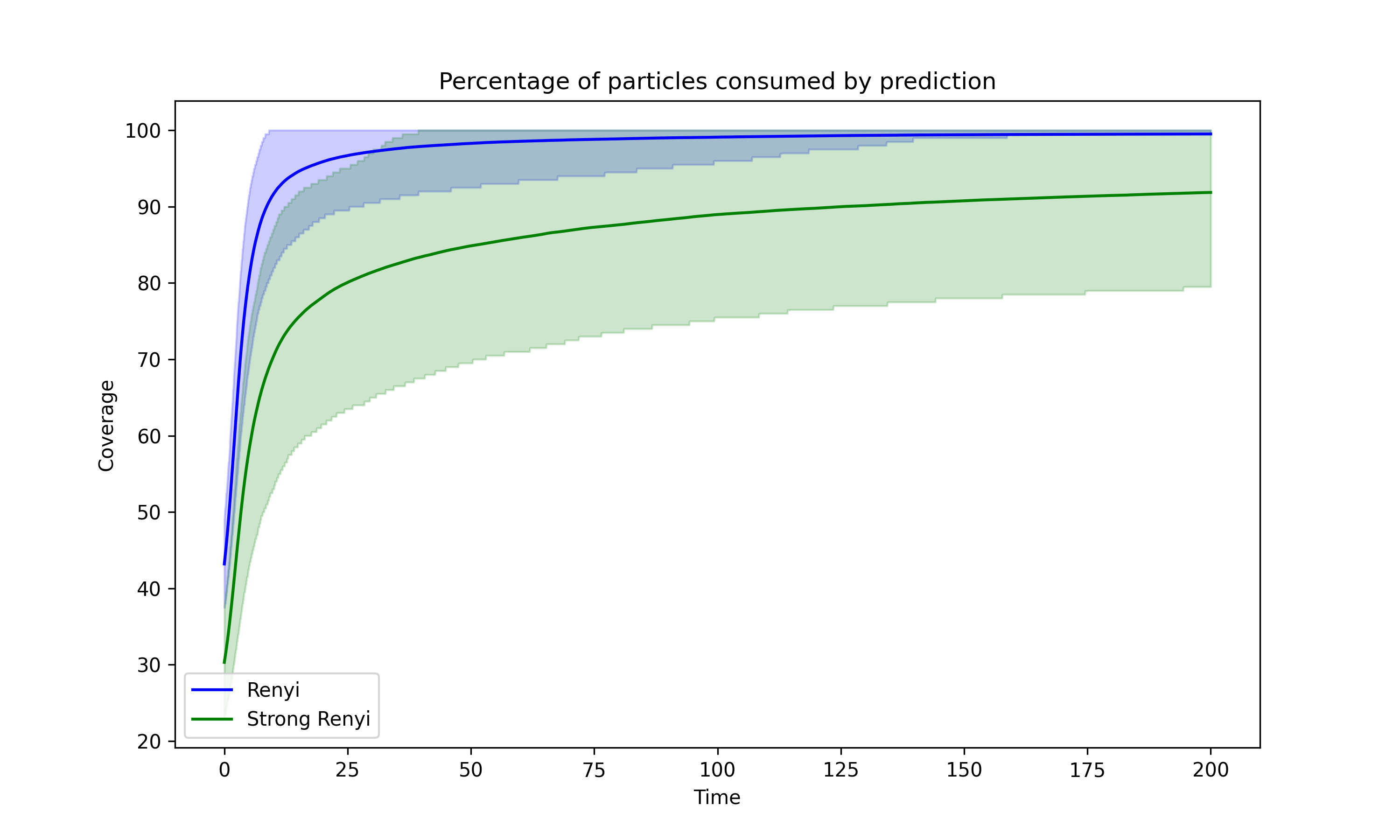}
    \caption{Total percentage of particles consumed by R\'enyi and strong R\'enyi centers over time. Here we have plotted average, $0.1$ and $0.9$ quantiles over 5000 experiments with $n = 200$, $d = 2$, $\beta = 64$, $\delta = 4\beta^{-1/2}$. }
    \label{fig:meta-clustering}
\end{figure}

The next result shows that strong R\'enyi centers remain nearly fixed for a long time.
\begin{lemma} 
\label{lemma:meta}
Let $d = 2$ and $Q = K = V = I_2$. Consider a subsequence of strong R\'enyi centers $x_{s_1}, \ldots, x_{s_m}$ satisfying the separation condition with constants $\eps, c > 0$
\begin{equation}
    \label{eq:renyi}
    \min_{i < s_j} | x_{s_j}- x_i | > c(1+2\eps)\beta^{-1/2}.
\end{equation}
Assume that 
\begin{equation}
    \label{eq:c}
c  > \beta^{1/2}\arccos((-1 + \sqrt{4\beta^2 + 1})/(2\beta)).
\end{equation}
Then for any time $T_j$ such that
\begin{equation}
    \label{eq:time_bd}
    T_j s_j h(c\beta^{-1/2}) < \eps c \beta^{-1/2},
\end{equation}
where the interaction potential $h$ is defined in~\eqref{eq:def_h}, the displacement of each center is bounded by
\[
\max_{t \in [0, T_j]} |x_{s_j}(t) - x_{s_j}(0)| < \eps c \beta^{-1/2}.
\]
\end{lemma}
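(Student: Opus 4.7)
In the polar coordinates of~\eqref{eqn:csa-2d}, it suffices to bound the angular displacement $|\phi_{s_j}(t) - \phi_{s_j}(0)|$, since the chord distance on $\mathbb{S}^1$ is upper-bounded by arc length. My first step is a monotonicity analysis of the interaction kernel $h$: a direct computation gives $h'(x) = e^{\beta(\cos x - 1)}(\cos x - \beta \sin^2 x)$, whose unique critical point in $(0,\pi)$ is $x^{\star} := \arccos\!\big((-1+\sqrt{1+4\beta^2})/(2\beta)\big)$, with $|h|$ strictly increasing on $[0, x^{\star}]$ and strictly decreasing on $[x^{\star}, \pi]$. Assumption~\eqref{eq:c} is exactly the statement $c\beta^{-1/2} > x^{\star}$, so $|h(y)| \le h(c\beta^{-1/2})$ whenever $|y| \in [c\beta^{-1/2}, \pi]$.

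The core of the proof is a continuity/bootstrap argument. Define the stopping time
\[
t^{*} := \inf\bigl\{t \ge 0 \,:\, |\phi_{s_j}(t) - \phi_{s_j}(0)| \ge \eps c\beta^{-1/2} \text{ or } \min_{i<s_j} |\phi_i(t) - \phi_{s_j}(t)| \le c\beta^{-1/2}\bigr\};
\]
the strict initial separation in~\eqref{eq:renyi} and continuity give $t^{*} > 0$. For $t \le t^{*}$, the kernel bound applies to every term in the right-hand side of~\eqref{eqn:csa-2d}, and since $Z_{s_j}(t) \ge 1$ (the diagonal term contributes $1$),
\[
|\dot\phi_{s_j}(t)| \;\le\; \frac{1}{Z_{s_j}(t)} \sum_{i=1}^{s_j-1} |h(\phi_i(t) - \phi_{s_j}(t))| \;\le\; s_j\,h(c\beta^{-1/2}).
\]
Integrating and applying~\eqref{eq:time_bd} gives $|\phi_{s_j}(t) - \phi_{s_j}(0)| \le t\,s_j\,h(c\beta^{-1/2}) < \eps c\beta^{-1/2}$ for every $t \le T_j \wedge t^{*}$, which rules out the first clause in the definition of $t^{*}$ strictly on $[0, T_j]$.

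The hard part is excluding the second clause: no preceding particle $x_i$ drifts within $c\beta^{-1/2}$ of $x_{s_j}$ during $[0, T_j]$. By the triangle inequality,
\[
|\phi_i(t) - \phi_{s_j}(t)| \;\ge\; |\phi_i(0) - \phi_{s_j}(0)| - |\phi_i(t) - \phi_i(0)| - |\phi_{s_j}(t) - \phi_{s_j}(0)|,
\]
so the initial buffer $c(1+2\eps)\beta^{-1/2}$ affords $2\eps c\beta^{-1/2}$ of slack, half of which is already consumed by the motion of $x_{s_j}$ bounded above. What remains is to show $|\phi_i(t) - \phi_i(0)| \le \eps c\beta^{-1/2}$ for every $i < s_j$ throughout the bootstrap interval. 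The cleanest resolution is to run the bootstrap \emph{simultaneously on all indices} $k \le s_j$: for each such $k$, the velocity estimate $|\dot\phi_k(t)| \le k\,h(c\beta^{-1/2})$ holds as long as $|\phi_\ell(t) - \phi_k(t)| \ge c\beta^{-1/2}$ for every $\ell < k$, and the separation relative to $x_{s_j}$ is maintained via the same triangle inequality. The delicate geometric input is that if $x_k$ is not itself a Rényi center and is pulled by a close predecessor, that predecessor lies at distance $> c(1+2\eps)\beta^{-1/2}$ from $x_{s_j}$ (by the strong Rényi property), so the pull on $x_k$ is directed away from $x_{s_j}$ and does not reduce $|\phi_k - \phi_{s_j}|$ at a dangerous rate. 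Formalizing this uniform, simultaneous control is the main technical ingredient of the proof.
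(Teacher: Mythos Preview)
Your monotonicity analysis of $h$, the definition of $t^*$, the bound $|\dot\phi_{s_j}| \le s_j h(c\beta^{-1/2})$ on the bootstrap interval, and the integration step are all correct and match the paper exactly. The issue is entirely in how you propose to handle the second clause of $t^*$.

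Your primary plan---bounding $|\phi_i(t)-\phi_i(0)|\le \eps c\beta^{-1/2}$ for \emph{every} $i<s_j$ via a simultaneous bootstrap---cannot work. A non-R\'enyi predecessor $\phi_k$ may have a neighbor $\phi_\ell$ arbitrarily close to it at time $0$ (nothing in the strong R\'enyi condition on $x_{s_j}$ forbids this), so the estimate $|\dot\phi_k|\le k\,h(c\beta^{-1/2})$ fails at $t=0$, and the displacement of $\phi_k$ over $[0,T_j]$ can be $\Theta(1)$. The triangle inequality you wrote therefore gives nothing. Your last paragraph senses the problem and gestures at the right fix (``the pull on $x_k$ is directed away from $x_{s_j}$''), but this formulation is both imprecise and unnecessary to establish for every $k$.

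The paper's argument is sharper: it never tracks individual displacements of predecessors. Instead, it tracks only $\Delta(t):=\min_{i<s_j}|\phi_i(t)-\phi_{s_j}(t)|$ directly and bounds its rate of decrease. Let $\phi_{r^*}$ be the predecessor realizing the minimum; say $\phi_{r^*}>\phi_{s_j}$. The point is that $\phi_{r^*}$ can be pulled \emph{downward} (towards $\phi_{s_j}$) only by particles $\phi_\ell$ with $\phi_\ell<\phi_{r^*}$; but since $r^*$ is the closest predecessor, any such $\phi_\ell$ must lie below $\phi_{s_j}-\Delta$, hence at angular distance $\ge 2\Delta$ from $\phi_{r^*}$. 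Each such term contributes at most $h(2\Delta)\le h(\Delta)$ once $\Delta>\tau_\beta^*$. Combining with $|\dot\phi_{s_j}|\le s_j h(\Delta)$ yields $-\dot\Delta \le 2s_j h(\Delta)$, and the initial buffer $\Delta(0)>c(1+2\eps)\beta^{-1/2}$ together with~\eqref{eq:time_bd} then keeps $\Delta(t)>c\beta^{-1/2}$ on $[0,T_j]$. This is the ``delicate geometric input'' you were reaching for, but applied only to the extremal particle---which is what makes it tractable. Replace your displacement-based triangle inequality with this gap-tracking argument and the proof closes.
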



The key observation driving this result is that strong R\'enyi centers are weakly affected by all previous particles. However, though this is correct on short time scales, it should be checked for all times in $[0, T]$. A complete proof can be found in Section~\ref{sec:proof_lemma_meta}.

\begin{remark}
    Using the properties of $h$ derived in Section~\ref{sec:energy_function} it can be shown that 
    \begin{itemize}
        \item For $\beta > 1$ a sufficient condition for~\eqref{eq:c} to hold is simply $c > 1$,
        \item a sufficient condition for~\eqref{eq:time_bd} to hold is
    \[
    T_js_{j} < e^{c^2/2 - c^4/(24\beta)}\eps. 
    \]
    \end{itemize}
    
    Moreover, it is easy to prove that indexes $s_j$ are mostly small. Thus, we see that early strong R\'enyi centers are almost stationary for a time that is exponential with the square of separation magnitude.
\end{remark}

R\'enyi centers and strong R\'enyi centers play a fundamental role in meta-stable clustering, warranting analysis of their properties as extreme points in a sequence. While defined here using geodesic distance on a sphere, the definition extends naturally to distances induced by $\langle Qx, Ky \rangle$ under appropriate conditions. This generalization aligns with our observation that meta-stable clustering occurs in the subspace $L$ where $V$ sends tokens and acts as the identity on $L$.

The distribution of these centers under various initialization schemes presents a key analytical challenge. Section~\ref{sec:R\'enyi} addresses the uniform i.i.d. case, where R\'enyi's classical result characterizes the expected number of centers. Extensions to general distributions and Markov processes---more relevant to language processing applications---remain open for R\'enyi centers due to their structural complexity, particularly in higher dimensions ($d > 2$). In contrast, strong R\'enyi centers are much easier to handle: even our computation of the average number of centers in Section~\ref{sec:R\'enyi} works for any distribution regardless of the dimension.

\subsection{Fixed Meta-stable Clustering Centers}

Having established the existence of $O(\sqrt{\beta})$ quasi-stationary tokens for $d=2$ and $n\gg 1$, we next examine their role as cluster centers. While Figures~\ref{fig:meta-clustering} and~\ref{fig:dynamical_system_evolution} provide substantial numerical evidence that these tokens attract and aggregate nearby particles, a rigorous proof remains elusive. We establish instead a weaker result: when quasi-stationary tokens are artificially frozen (analogous to cross-attention in encoder-decoder architectures), all other tokens converge to these frozen centers. This simplified model, while instructive, differs from true meta-stable clustering in important aspects detailed in Section~\ref{sec:limitations}.


We only state our result for $d=2$ and identity parameter matrices as in~\eqref{eqn:csa-2d}.

\begin{theorem}[Clustering to frozen tokens for $K=Q=V=I_2$]
\label{thm:fixed_centers}
Let $\theta_1, \ldots, \theta_m$ be fixed tokens that are well-separated, namely $|\theta_i - \theta_j| >
c\beta^{-1/2}$. Let $\mu_0$ be an absolutely continuous probability measure on $(\mathbb{S}^1)^n$ and
let $\phi_1(0),\ldots,\phi_n(0)\sim \mu_0$. Consider causal attention dynamics \eqref{eqn:csa-2d}, with additional
influence from the fixed tokens $\theta_j$, which enter evolution with additional weights $a_j \geq
1$. Specifically, we have 
\[
\dot{\phi}_k = \frac{1}{Z_k} \Big( \sum_{j = 1}^k e^{\beta (\cos(\phi_k - \phi_j)-1)} \sin(\phi_j - \phi_k) + \sum_{j=1}^m a_j e^{\beta (\cos(\phi_k - \theta_j)-1)} \sin(\theta_j - \phi_k) \Big),
\]
with 
\[
Z_k = \sum_{j=1}^k e^{\beta (\cos(\phi_k - \phi_j)-1)} + \sum_{j=1}^m a_j e^{\beta (\cos(\phi_k - \theta_j)-1)}.
\]
Define $N = n + \sum_{j=1}^m a_j$ and $g=h'$ where $h$ is the interaction potential of~\eqref{eq:def_h}. 

If $N,\beta, \eps>0$, and $c > 2 + 2\eps$ satisfy:
\begin{align*}
N h((c - 1 - 2\eps)\beta^{-1/2}) &< h(\eps \beta^{-1/2}) \\
N g((c-2\eps)\beta^{-1/2}) &< g(\eps \beta^{-1/2}),
\end{align*}
then with probability one, $\phi(t)$ converges to an asymptotically stable critical point $\phi^* \in (\mathbb{S}^1)^n$ satisfying:
\[
\forall k \in [n], \; \exists j \in [m] : |\phi_k^* - \theta_j| < \eps \beta^{-1/2}.
\]
\end{theorem}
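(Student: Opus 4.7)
The strategy is to exploit the causal (triangular) structure of \eqref{eqn:csa-2d}: token $\phi_k$ depends only on $\phi_1, \ldots, \phi_{k-1}$ and the frozen $\theta_j$'s. I would prove the statement by induction on $k$, analyzing each token as an (asymptotically) autonomous scalar ODE on $\mathbb{S}^1$ once the preceding tokens have converged.

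\textbf{Base case $k=1$.} Since $h(0)=0$, the self-interaction vanishes and $\phi_1$ obeys
\[
\dot\phi_1 \;=\; \frac{1}{Z_1}\sum_{j=1}^m a_j\, h(\theta_j - \phi_1),
\]
a $1$D autonomous flow on $\mathbb{S}^1$. I claim that each $(1+\eps)\beta^{-1/2}$-neighborhood of some $\theta_{j_0}$ is a trapping region containing a unique asymptotically stable equilibrium inside the smaller $\eps\beta^{-1/2}$-neighborhood. On the outer boundary the dominant attractive force from $\theta_{j_0}$ has magnitude at least $h(\eps\beta^{-1/2})$, while the aggregate effect of all other $\theta_j$ (each at distance at least $(c-1-2\eps)\beta^{-1/2}$) is at most $N\,h((c-1-2\eps)\beta^{-1/2})$, strictly smaller by the first hypothesis, so the vector field points inward on the outer boundary. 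The second hypothesis, phrased in terms of $g=h'$, bounds the Jacobian and ensures strict contractivity inside the inner neighborhood, yielding both uniqueness of the equilibrium there and its asymptotic stability. Almost-sure convergence to a stable (as opposed to saddle) equilibrium then follows from absolute continuity of $\mu_0$ together with the stable-manifold theorem.

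\textbf{Inductive step.} Assuming $\phi_i(t)\to\phi_i^*$ for $i < k$ with each $\phi_i^*$ within $\eps\beta^{-1/2}$ of some $\theta_{j_i}$, the ODE for $\phi_k$ is asymptotically autonomous with the same structural form as the base case but with up to $N = n + \sum_j a_j$ effective attractors, all clustered in $\eps\beta^{-1/2}$-balls around the original $\theta_j$'s. The shifted arguments $c-2\eps$ and $c-1-2\eps$ are engineered for precisely this configuration: two effective attractors assigned to distinct $\theta_j$'s are separated by at least $(c-2\eps)\beta^{-1/2}$, and a candidate $\phi_k$ on the outer shell of one basin sits at distance at least $(c-1-2\eps)\beta^{-1/2}$ from the attractors of all other basins. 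Re-running the base-case argument for the limiting system and invoking a standard result on asymptotically autonomous ODEs (e.g.\ reduction to the $\omega$-limit set of the autonomous limit) completes the induction; the almost-sure conclusion is propagated by the fact that the smooth flow preserves Lebesgue-null exceptional sets.

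\textbf{Main obstacle.} The most delicate point is the quantitative control over the transient non-autonomous regime: while $\phi_1,\ldots,\phi_{k-1}$ are still converging, their positions feed into $\dot\phi_k$ and could, in principle, push $\phi_k$ outside a potential basin of attraction before the earlier tokens stabilize. Making the induction rigorous therefore requires matching the exponential decay rate of the previous tokens (coming from the stable linearization at their limits) against the maximum drift $\phi_k$ can accumulate in the shell between the inner and outer neighborhoods, in the same spirit as the explicit time estimates of Lemma~\ref{lemma:meta}. A secondary issue is ensuring that the union over $k$ of the measure-zero ``bad'' sets (initial conditions landing on stable manifolds of unstable equilibria) remains measure-zero under $\mu_0$; this is again handled by absolute continuity, since the flow is a diffeomorphism.
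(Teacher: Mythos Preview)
Your overall inductive strategy is natural, but the paper takes a rather different route that sidesteps exactly the obstacle you flag. The key observation you are missing is that the system has a \emph{sequential gradient flow} structure: for each $k$,
\[
\dot\phi_k \;=\; -\frac{1}{Z_k}\,\frac{\partial}{\partial\phi_k}E_k(\phi_1,\ldots,\phi_k),
\qquad
E_k \;=\; -\Big(\sum_{j<k} e^{\beta(\cos(\phi_k-\phi_j)-1)} + \sum_{j} a_j e^{\beta(\cos(\phi_k-\theta_j)-1)}\Big),
\]
so each coordinate is a (normalized) one-dimensional gradient descent in its own energy. The paper packages this into a general convergence lemma (Lemma~\ref{lemma:convergence}): under the hypothesis that every critical point of the full system is either strongly stable or strongly unstable, an $L^2$/energy argument gives $\dot\phi_k\to 0$ and convergence to a strongly stable critical point for a.e.\ initialization. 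This completely bypasses your ``asymptotically autonomous'' step --- there is no need to control the transient drift of $\phi_k$ while $\phi_1,\ldots,\phi_{k-1}$ are still moving, nor to match decay rates against shell widths.

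With convergence in hand, the remaining work is purely algebraic: show that every critical point whose Jacobian eigenvalues $\partial f_k/\partial\phi_k$ are all nonpositive must (i) be clustered, and (ii) actually have strictly negative eigenvalues. This is where the two hypotheses enter, and not quite in the way you describe. A nonpositive diagonal entry forces some $g(\phi_k^*-\cdot)>0$, hence $\phi_k^*$ is within $\tau_\beta^*$ of a center or of a previously clustered particle; if $\phi_k^*$ were then between $\eps\beta^{-1/2}$ and $\tau_\beta^*+\eps\beta^{-1/2}$ from the nearest center, the criticality equation $\sum h(\cdot)=0$ would force $h(\eps\beta^{-1/2})\le N\,h((c-1-2\eps)\beta^{-1/2})$, contradicting the first hypothesis. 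The second hypothesis is used only to rule out a zero eigenvalue at an already-clustered configuration (so that Lemma~\ref{lemma:convergence} applies), not to obtain contractivity on a shell.

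Your trapping-region version has a quantitative mismatch: on the outer boundary at radius $(1+\eps)\beta^{-1/2}$ the pull from $\theta_{j_0}$ is $h((1+\eps)\beta^{-1/2})$, which lies on the decreasing branch of $h$ and is not a priori bounded below by $h(\eps\beta^{-1/2})$; the stated hypotheses are calibrated to the critical-point balance above, not to an inward-pointing vector field on that particular shell. This is likely fixable with different radii, but the more substantial gap is the non-autonomous transient, which the paper's energy argument removes entirely.
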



Since our dynamical system is not a gradient flow, the classical \L{}ojasiewicz convergence theorem does not apply. Instead, we establish convergence by observing that the causal dynamics (both with and without frozen tokens) is, in fact, a \emph{sequential} gradient flow, where each particle minimizes a slightly different energy potential. For such systems on $\mathbb{S}^1$, we demonstrate convergence through an alternative approach that circumvents the \L{}ojasiewicz framework.

\begin{lemma}
\label{lemma:convergence}
Consider a system of $n$ particles on $\mathbb{S}^1$ with angular coordinates $\phi_1, \ldots, \phi_n \in 2\pi\mathbb{R}/\mathbb{Z}$ evolving according to
\[
\dot{\phi}_k = -\frac{1}{Z_k(\phi_1, \ldots, \phi_k)}\frac{\partial}{\partial \phi_k} E_k(\phi_1, \ldots, \phi_k),
\]
where $E_1, \ldots, E_n$ are $C^1$ energy functions and $Z_k$ are $C^1$ normalization factors bounded by $0 < c < Z_k(\phi) < C$. Assume:
\begin{enumerate}
   \item Each $E_k$ has isolated critical points in $\phi_k$ for fixed $\phi_j, j\neq k$ (satisfied by analyticity),
   \item For any $k \in [n]$, critical points restricted to the first $k$ particles are either strongly stable (the Jacobian has only eigenvalues with strictly negative real parts) or strongly unstable (there is an eigenvalue with a strictly positive real part).
\end{enumerate}
Then for almost every initial condition $\phi(0)$ with respect to Lebesgue measure, $\phi(t)$ converges to a strongly stable critical point $\phi^*$.
\end{lemma}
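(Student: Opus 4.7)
The plan is to induct on $k$, using the triangular (sequential) structure to reduce the problem to a chain of scalar ODEs on $\mathbb{S}^1$, and then apply a stable-manifold argument to rule out limits at non-strongly-stable critical points. The base case $k=1$ is the 1D gradient flow $\dot\phi_1=-Z_1^{-1}(\phi_1)E_1'(\phi_1)$ on $\mathbb{S}^1$; because $E_1'$ has isolated zeros, $\phi_1$ is eventually monotone between consecutive zeros and converges to some $\phi_1^\infty$. Inductively assume $\phi_j(t)\to\phi_j^\infty$ for $j<k$. Then $\phi_k$ obeys a non-autonomous scalar ODE on $\mathbb{S}^1$ whose driving field converges, uniformly in $\phi_k$ (by $C^1$-regularity and compactness of $\mathbb{S}^1$), to the autonomous gradient field
\[
g_\infty(\phi_k):=-Z_k^{-1}(\phi_1^\infty,\ldots,\phi_{k-1}^\infty,\phi_k)\,\partial_{\phi_k}E_k(\phi_1^\infty,\ldots,\phi_{k-1}^\infty,\phi_k).
\]
By hypothesis~1, $g_\infty$ has finitely many zeros on $\mathbb{S}^1$; between any two consecutive zeros its sign is constant, and uniform convergence transfers this sign to the perturbed field for all sufficiently large $t$. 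Hence $\phi_k(t)$ is eventually monotone on each such arc and must converge to some $\phi_k^\infty$ with $g_\infty(\phi_k^\infty)=0$. Continuity of the full vector field then gives that $\phi^\infty=(\phi_1^\infty,\ldots,\phi_n^\infty)$ is a critical point of the full system.

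Next I would show that the set of initial conditions whose trajectory converges to a non-strongly-stable critical point has Lebesgue measure zero. The Jacobian $J(\phi^*)$ of the full vector field at any critical point is lower-triangular, since $\dot\phi_k$ depends only on $\phi_1,\ldots,\phi_k$; so its eigenvalues coincide with its diagonal entries $\partial_{\phi_k}f_k(\phi^*)$, and the Jacobian of the $k$-subsystem is the leading $k\times k$ principal submatrix. If $\phi^*$ is not strongly stable for the full system, some diagonal entry has nonnegative real part, so the corresponding subsystem fails to be strongly stable, and hypothesis~2 forces it to be strongly unstable, producing an eigenvalue with strictly positive real part that also sits in $\mathrm{spec}\,J(\phi^*)$. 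The $C^1$ center-stable manifold theorem then provides a local invariant manifold at $\phi^*$ of codimension at least one containing every point locally attracted to $\phi^*$; pulling back along the $C^1$ flow, the full basin of $\phi^*$ is a countable union of Lebesgue-null sets. Since critical points are isolated and $(\mathbb{S}^1)^n$ is compact, the set of non-strongly-stable critical points is finite, and a finite union of null sets is null.

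The hardest step is the non-autonomous convergence: pointwise convergence of the driving field does not by itself force convergence of a trajectory. What rescues the argument is the interplay between the one-dimensional phase space and the gradient structure of the limit, which together force eventual monotonicity on each arc between consecutive zeros of $g_\infty$; equivalently, Thieme's theorem on asymptotically autonomous systems yields $\omega(\phi_k)\subseteq\{g_\infty=0\}$, and connectedness of $\omega$-limit sets on a 1D compact phase space collapses this to a singleton. Everything else---identifying the limit as a critical point and the stable-manifold argument for genericity---is standard once the coordinate-wise convergence is in hand.
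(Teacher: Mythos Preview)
Your proposal is correct and structurally parallel to the paper's proof---both induct on $k$, both use the lower-triangular Jacobian to read off eigenvalues from the diagonal, and both invoke the center-stable-manifold theorem to rule out non-strongly-stable limits on a null set---but the mechanism you use in the induction step is genuinely different.

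The paper's induction step is a Lyapunov/energy argument: it carries the stronger hypothesis that $\phi_1,\ldots,\phi_{k-1}$ converge \emph{exponentially} (hence $\dot\phi_j\in L_1$), differentiates $E_k(\phi(t))$ along trajectories, shows the cross-terms $\sum_{j<k}(\partial_{\phi_j}E_k)\dot\phi_j$ lie in $L_1$, and concludes $Z_k|\dot\phi_k|^2\in L_1$, hence $\dot\phi_k\in L_2$, hence (via uniform continuity of $\dot\phi_k$) $\dot\phi_k\to 0$; then isolated zeros of $\partial_{\phi_k}E_k(\phi_1^*,\ldots,\phi_{k-1}^*,\cdot)$ force $\phi_k$ to settle at one of them. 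Your induction step instead treats $\phi_k$ as an asymptotically autonomous scalar ODE and appeals to Thieme's theorem (or a direct one-dimensional phase-line argument) to get $\omega(\phi_k)\subset\{g_\infty=0\}$ and then connectedness to collapse this to a point. Your approach buys a cleaner induction hypothesis (mere convergence, no rate needed) at the price of importing an external dynamical-systems result; the paper's approach is more self-contained and exploits the gradient form of $E_k$ directly, but has to propagate the exponential rate through the induction and make the a.e.\ statement interact with the induction at each stage. Your direct ``eventual monotonicity'' sketch is a bit informal near the zeros of $g_\infty$ (where the perturbation can dominate), but your fallback to Thieme covers that gap.
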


The proof is deferred to Section~\ref{sec:proof_lemma_convergence}.

\begin{remark}
\label{rem:interaction}
The conditions of Theorem~\ref{thm:fixed_centers} are satisfied under the following explicit bounds:
\[
\begin{aligned}
\eps &< 0.1, \quad 
c &\geq 5.5 + 2\eps, \quad 
\beta &\geq (c - 1 - 2\eps)^2/2, \quad 
N &\leq \frac{\eps}{c-1}\exp(3(c-1-2\eps)^2/8).
\end{aligned}
\]
Note that this result requires only $\beta\gtrsim \log N$.
For example, taking $\eps = 0.1$ and $c = 6.5$ yields $\beta \geq 14$ and $N \leq 700$ is sufficient. See Lemma~\ref{lem:interaction} for the proof.
\end{remark}

\section{Limitations}
\label{sec:limitations}

Our analysis presents both theoretical and practical limitations. From a theoretical perspective, we establish two key results: (1) strong R\'enyi centers with separation distance $c \beta^{-1/2}$ maintain quasi-stationarity for time scales of order $\exp(c^2/2)$ per Lemma~\ref{lemma:meta}, and (2) exactly stationary centers attract all remaining particles (Theorem~\ref{thm:fixed_centers}). In particular, by choosing $c \sim \beta^{\eps}$, we can get exponential in $\beta$ time of quasi-stationarity. However, this falls short of proving meta-stable clustering, as Theorem~\ref{thm:fixed_centers} provides no bounds on the convergence time. Consequently, we cannot guarantee that strong R\'enyi centers remain sufficiently stationary during particle aggregation. A complete meta-stability theory would require demonstrating that each R\'enyi center captures $\Omega(n)$ particles in $O(1)$ time as $n\to\infty$. Currently, even the weaker claim of capturing $\omega(1)$ particles remains unproven, presenting a crucial direction for future research.

The practical limitations stem from two model simplifications: the use of tied weights across layers (though supported by successful implementations, see \cite{lanalbert}), and the omission of the MLP layer central to Transformer architectures. Incorporating the MLP dynamics into our theoretical framework remains a significant open challenge.



\section*{Acknowledgments}
The work of NK and YP was partially supported by the MIT-IBM Watson AI Lab and the National Science Foundation under Grant No CCF-2131115. PR is supported by NSF grants DMS-2022448 and CCF-2106377, and a gift from Apple. 

\bibliographystyle{apalike} 
\bibliography{transformer_references}

\newpage 
\appendix
\section{Supplementary Material}

\subsection{From Transformers to ODEs}
\label{ap:ode}
The derivation of the equation~\eqref{eqn:sa} was thoroughly described in \cite{mathpersp23}, but for completeness, we briefly repeat it here to explain how the problem arises.

In general, a typical Transformer architecture consists of repeated layers of multi-head attention, multi-layer perceptrons (MLP), normalization, and residual connections \cite{vaswani2017attention}. In this work, we simplify this setting by focusing only on the geometric behavior of a single-head attention layer with normalization and residual connections, omitting the MLP for brevity.

One head of a standard attention layer is defined as follows. Given an input sequence represented by the token embeddings $X \in \mathbb{R}^{n \times d}$, where $n$ is the number of tokens and $d$ is the dimension of the embedding space, and matrices $W_Q$, $W_K$, $W_V$ to compute queries, keys, and values, the attention mechanism computes a weighted sum of values based on their relevance to a query in the following form
\[
\text{Attention}(X) = \text{softmax}\left(\frac{XW_QW_K^\top X^\top}{\sqrt{d}}\right) X W_V.
\]
By adding an RMS normalization from \cite{RMS} and a residual connection, the transformation from layer $t$ to layer $t+1$ is given by:
\begin{equation}
\label{eq:transformer}
X^{t+1} = X^t + \textrm{RMSNorm}(\textrm{Attention}(X^t)).
\end{equation}
Here, different tokens are represented as rows of the matrix $X$ for computational reasons. For consistency with the convention that vectors are represented as columns, we transpose everything and denote a sequence of tokens encoded as particles in the $d$-dimensional embedding space $\mathbb{R}^d$ as $(x_1, \ldots, x_n)$, corresponding to the columns of $X^\top$. Additionally, to simplify the notation we denote $V := W_V^\top$, $Q := W_Q^\top$, and $K := W_K^\top$, and introduce an arbitrary temperature parameter $\beta$ instead of the fixed scaling factor $1/\sqrt{d}$. With these notational adjustments, one term of attention added to the $k$-th token can be written explicitly as:
\[
\textrm{attn}(x_1, \ldots, x_n)_k = \frac{1}{Z_k}\sum_{j=1}^n e^{\beta \la Q x_k, K x_j\ra} Vx_j,
\]
where
\[
Z_k = \sum_{j=1}^n e^{\beta \la Q x_k, K x_j\ra}.
\]
The equation~\eqref{eq:transformer} can be interpreted as a discrete derivative, with $X^{t+1} - X^t$ representing the difference between layers. Therefore, the trajectory $X^t$ can be viewed as a discretization of a continuous flow. RMS normalization ensures that tokens remain on the scaled unit sphere, but from properly rescaling $Q, K, V$ we can assume that they stay on the standard unit sphere $\mathbb{S}^{d-1}$. Combining all these observations, the dynamics of token propagation through layers can be expressed as:
\[
\dot x_k(t) = \frac{1}{Z_k(t)} P_{x_k(t)} \left(\sum_{j=1}^n e^{\beta \la Q(t) x_k(t), K(t) x_j(t)\ra} V(t) x_j(t)\right),
\]
with
\[
Z_k(t) = \sum_{j=1}^n e^{\beta \la Q(t) x_k(t), K(t) x_j(t)\ra},
\]
and the projector $P_x(y) := y - \la x, y \ra x / |x|^2$ ensuring that $x_k$ remains on the sphere. This leads to the equation~\eqref{eqn:sa}, and applying a causal constraint, where each token attends only to the previous ones, transforms it into the causal attention equation~\eqref{eqn:csa} studied in this work.

\subsection{Interaction Potential}
\label{sec:energy_function}

For completeness, here we describe the key properties of the interaction potential $h(x) = e^{\beta (\cos(x) - 1)} \sin(x)$ from~\eqref{eq:def_h}, which defines the interactions between particles, and its derivative $g(x) = h'(x)$. 

\begin{figure}[t]
\centering
        \includegraphics[width=0.4\linewidth]{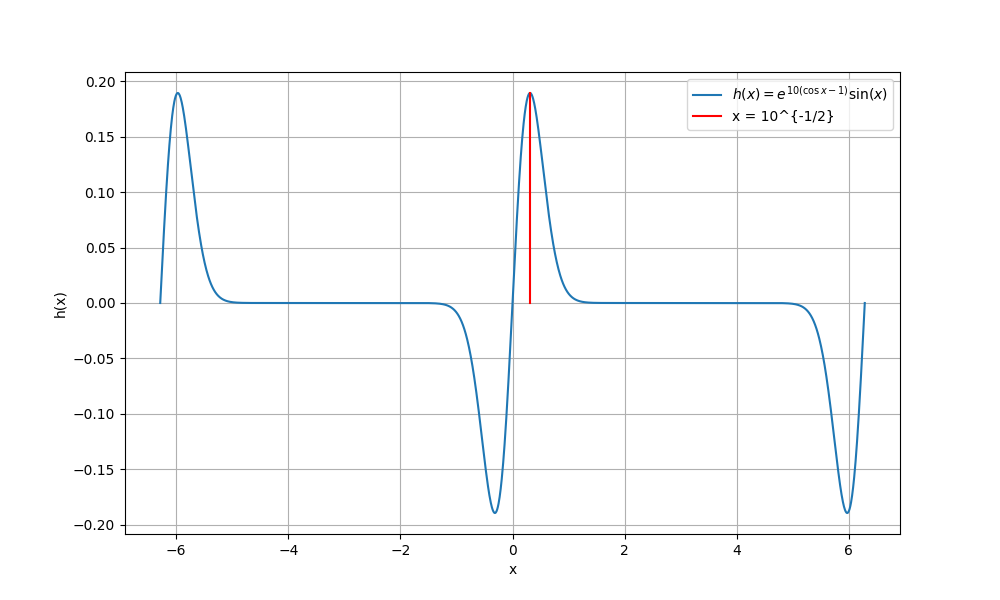}
        \caption{Interaction function $h(x)$ for $\beta = 10$. The blue line is $h(x)$ and the red line is $x = \beta^{-1/2}$, close to $x = \tau_{\beta}^*$.}
        \label{fig:interaction}
\end{figure}

\begin{lemma}[Properties of Interaction Functions]
\label{lemma:interaction}
Let $h(x) = e^{\beta(\cos x - 1)}\sin x$ and $g(x) = h'(x)$. Then:
\begin{enumerate}
   \item Both $h, g$ are $2\pi$-periodic, $h(x)$ is odd and $g(x)$ is even.
   \item $h(x)\ge 0$ for all $x \in [0,\pi]$.
   \item $h(x)$ is increasing on $[0, \tau_{\beta}^*]$ and decreasing on $[\tau_{\beta}^*, \pi]$, where
   \[
   \cos \tau_{\beta}^* = \frac{-1+\sqrt{4\beta^2 + 1}}{2\beta}
   \]
   and for $\beta \geq 1$,
   \[
   (\beta+1/2)^{-1/2} < \tau_{\beta}^* < \beta^{-1/2}.
   \]
   
   \item For $x > 0$, $h(x)$ is bounded by
   \[
   e^{-\beta x^2/2}(x-x^3/6) < h(x) < e^{-\beta x^2/2 + \beta x^4/24} x.
   \]
   
   \item For $g(x)$, the following bounds hold:
   \begin{align*}
   g(x) &> e^{-\beta x^2/2}(1-x^2/2 - \beta x^2) &&\text{for } 0 < x < (\beta + 1/2)^{-1/2}, \\
   g(x) &> -e^{-\beta x^2/2 + \beta x^4/24} \beta x^2 &&\text{for } x > 0.
   \end{align*}
\end{enumerate}
\end{lemma}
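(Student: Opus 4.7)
Parts 1 and 2 are immediate: both $h$ and $g$ are $2\pi$-periodic since $\cos$ and $\sin$ are; $h$ is odd because $\sin$ is odd and $\cos$ is even, and the derivative of an odd function is even; nonnegativity of $h$ on $[0,\pi]$ follows from $\sin x \geq 0$ there together with positivity of the exponential factor.

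For Part 3, I compute
\[
h'(x) = e^{\beta(\cos x - 1)}\bigl(\cos x - \beta \sin^2 x\bigr) = e^{\beta(\cos x - 1)}\bigl(\beta \cos^2 x + \cos x - \beta\bigr).
\]
The exponential prefactor is strictly positive, so the sign of $h'(x)$ on $[0,\pi]$ is controlled by the quadratic $\beta u^2 + u - \beta$ in $u = \cos x$. Its roots are $u = (-1 \pm \sqrt{4\beta^2+1})/(2\beta)$; only the $+$ root lies in $[-1, 1]$ and defines $\tau_\beta^*$. The quadratic evaluates to $1$ at $u = 1$ and $-1$ at $u = -1$, so $h'>0$ on $[0, \tau_\beta^*)$ and $h'<0$ on $(\tau_\beta^*, \pi]$. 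For the lower bound on $\tau_\beta^*$, the elementary Taylor estimates $\cos x \geq 1 - x^2/2$ and $\sin x \leq x$ give $\cos x - \beta\sin^2 x \geq 1 - (\beta + 1/2)x^2$, which is strictly positive at $x = (\beta + 1/2)^{-1/2}$ (strict because $\sin x < x$ for $x > 0$), hence $\tau_\beta^* > (\beta + 1/2)^{-1/2}$. For the upper bound, I use the sharper estimates $\cos x \leq 1 - x^2/2 + x^4/24$ and $\sin x \geq x - x^3/6$ (so $\sin^2 x \geq x^2 - x^4/3$) to obtain $\cos x - \beta\sin^2 x \leq -1/(6\beta) + 1/(24\beta^2)$ at $x = \beta^{-1/2}$, negative for $\beta \geq 1$, which yields $\tau_\beta^* < \beta^{-1/2}$.

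For Part 4, the two-sided bound on $h$ follows by combining the separate estimates $-x^2/2 \leq \cos x - 1 \leq -x^2/2 + x^4/24$ (valid on $[0, \pi]$ by alternating-series truncation) with $x - x^3/6 \leq \sin x \leq x$. Multiplying the exponentiated $\cos$ bound by the corresponding $\sin$ bound of matching sign gives exactly $e^{-\beta x^2/2}(x - x^3/6) < h(x) < e^{-\beta x^2/2 + \beta x^4/24}\,x$. For Part 5, I write $g(x) = e^{\beta(\cos x - 1)}(\cos x - \beta\sin^2 x)$. In the regime $0 < x < (\beta + 1/2)^{-1/2}$ already isolated in Part 3, both factors are strictly positive, so applying $e^{\beta(\cos x - 1)} \geq e^{-\beta x^2/2}$ together with $\cos x - \beta\sin^2 x \geq 1 - x^2/2 - \beta x^2$ gives the first bound. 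For the universal lower bound, I split on the sign of $\cos x - \beta\sin^2 x$: when nonnegative the inequality is immediate since the right-hand side is negative; when negative, I bound the absolute value by $\beta\sin^2 x - \cos x$, estimate it against $\beta x^2$ in the regime $x \in (0, \pi/2]$ where $\cos x \geq 0$, and handle $x \in (\pi/2, \pi]$ by exploiting that there $e^{\beta(\cos x - 1)}$ is exponentially smaller than the envelope $e^{-\beta x^2/2 + \beta x^4/24}$, absorbing the extra additive constant from $|\cos x| \leq 1$.

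\textbf{Main obstacle.} The most delicate piece is the universal lower bound on $g$ on the interval $(\pi/2, \pi]$: the naive bound $|\cos x - \beta\sin^2 x| \leq 1 + \beta\sin^2 x$ carries an extra $+1$ that does not fit inside the target $\beta x^2$, so one must quantitatively compare $e^{\beta(\cos x - 1)}(1 + \beta\sin^2 x)$ to $e^{-\beta x^2/2 + \beta x^4/24}\beta x^2$ and use that the former decays like $e^{-2\beta}$ near $x = \pi$ while the envelope decays only like $e^{-0.87\beta}$. The $\tau_\beta^*$ bounds also require selecting Taylor approximations of the right order, but these are routine.
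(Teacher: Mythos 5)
Your proposal is correct and follows essentially the same route as the paper: an explicit computation of $g(x)=e^{\beta(\cos x-1)}(\beta\cos^2x+\cos x-\beta)$, analysis of the quadratic in $\cos x$ to locate $\tau_\beta^*$, and the elementary Taylor inequalities $x-x^3/6<\sin x<x$, $-x^2/2<\cos x-1<-x^2/2+x^4/24$ for the bounds on $h$ and $g$. The only variations are cosmetic: you certify $\tau_\beta^*<\beta^{-1/2}$ by checking the sign of $g$ at $\beta^{-1/2}$ rather than the paper's comparison of $\cos\tau_\beta^*$ with an upper bound on $\cos\beta^{-1/2}$, and your flagged obstacle on $(\pi/2,\pi]$ (which the paper's terse proof passes over silently) is handled even more simply by noting that there $\beta\sin^2x-\cos x\le\beta+1\le\beta x^2$ once $\beta\gtrsim1$, so the crude envelope $e^{\beta(\cos x-1)}\le e^{-\beta x^2/2+\beta x^4/24}$ already suffices without the exponential-decay comparison.
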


    
\begin{proof}

1. The periodicity, oddness of $h$ and evenness of $g$ follow directly from their definitions.

2. Computing $g(x)$ explicitly:
\[
g(x) = e^{\beta (\cos(x) - 1)}(-\beta \sin^2(x) + \cos(x)) = e^{\beta (\cos(x) - 1)}(\beta \cos(x)^2 + \cos(x) - \beta).
\]
The sign of $g(x)$ changes at $\tau_{\beta}^*$, where
\[
\cos \tau_\beta^* = \frac{-1 + \sqrt{4\beta^2 + 1}}{2\beta},
\]
establishing that $h(x)$ increases on $[0, \tau_{\beta}^*]$ and decreases on $[\tau_{\beta}^*, \pi]$.

For the the lower bound on $\tau_{\beta}^*$: For $x < (\beta+1/2)^{-1/2}$, we have
\[
\cos x > 1 - x^2/2 > \beta x^2 > \beta \sin^2 x,
\]
implying $g(x) > 0$ in this region.

For the upper bound on $\tau_{\beta}^*$, fix $z = \beta^{-1/2}$ and observe that it suffices to show
\[
\cos \tau_{\beta^*} = \frac{-1 + \sqrt{4\beta^2 + 1}}{2\beta} > \cos \beta^{-1/2} = \cos z.
\]
Using $\cos z < 1 - z^2/2 + z^4/24$, this reduces to verifying
\[
-z^2/2 + \sqrt{1+ z^4/4} > 1 - z^2/2 + z^4/24,
\]
which holds for $z < 3.13$. It is true, because $z^{-2} = \beta  \geq 1$.

3. and 4. The bounds on $h$ and $g$ follow from the standard inequalities
\[
x - x^3/6 < \sin x < x, \quad \text{and} \quad -x^2/2 < \cos x < -x^2/2 + x^4/24,
\]
combined with our characterization of $g$'s sign via $\tau_{\beta}^*$.
\end{proof}





We now turn to the proof of Remark~\ref{rem:interaction}.

\begin{lemma}
\label{lem:interaction} 
Let $N$, $c$, $\eps$, and $\beta$ satisfy:
\begin{align*}
c &\geq 5.5 + 2\eps, \\
\beta &\geq (c - 1 - 2\eps)^2/2, \\
N &< e^{3(c-1-2\eps)^2/8} \frac{\eps}{c-1}.
\end{align*}
Then:
\[
N h((c - 1 - 2\eps) \beta^{-1/2}) < h(\eps \beta^{-1/2}) \quad \text{and} \quad -N g((c-2\eps)\beta^{-1/2}) < g(\eps \beta^{-1/2}).
\]
\end{lemma}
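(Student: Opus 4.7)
The plan is to estimate the four quantities $h(a\beta^{-1/2})$, $h(\eps\beta^{-1/2})$, $-g(b\beta^{-1/2})$, $g(\eps\beta^{-1/2})$ separately using the explicit bounds from Lemma~\ref{lemma:interaction}, writing $a := c-1-2\eps$ and $b := c-2\eps = a+1$. The single role of the hypothesis $\beta \geq a^2/2$ throughout the argument is to tame the quartic correction $e^{\beta x^4/24}$ at $x \in \{a\beta^{-1/2}, b\beta^{-1/2}\}$: it converts those correction factors into $e^{a^2/12}$ and $e^{b^4/(12a^2)}$ respectively, preserving the dominant Gaussian decay $e^{-a^2/2}$ (resp.\ $e^{-b^2/2}$) in the upper bounds.

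For the $h$-inequality, item~4 of Lemma~\ref{lemma:interaction} gives
\[
h(a\beta^{-1/2}) < e^{-a^2/2 + a^4/(24\beta)}\, a\beta^{-1/2} \leq e^{-5a^2/12}\, a\beta^{-1/2},
\]
while the matching lower bound yields $h(\eps\beta^{-1/2}) > e^{-\eps^2/2}\,\eps\beta^{-1/2}(1-\eps^2/(6\beta))$, in which the final factor exceeds $1-10^{-4}$ since $\eps \leq 0.1$ and $\beta \geq a^2/2 \geq 15$. After cancelling $\beta^{-1/2}$, the target reduces to the sufficient condition $N \leq (1-10^{-4})\tfrac{\eps}{a} e^{5a^2/12 - \eps^2/2}$, and comparing with the hypothesis $N < \tfrac{\eps}{c-1}e^{3a^2/8}$ amounts to verifying
\[
\frac{c-1}{a}\,(1-10^{-4})\, e^{a^2/24 - \eps^2/2} \geq 1,
\]
which holds since $(c-1)/a \geq 1$ and $e^{a^2/24} \geq e^{1.26} > 3.5$ for $a \geq 5.5$.

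For the $g$-inequality, item~5 of Lemma~\ref{lemma:interaction} yields $-g(b\beta^{-1/2}) < e^{-b^2/2 + b^4/(24\beta)}\, b^2$ (using $\beta x^2 = b^2$ at $x = b\beta^{-1/2}$) and $g(\eps\beta^{-1/2}) > e^{-\eps^2/2}(1 - \eps^2 - \eps^2/(2\beta)) > 0.98\, e^{-\eps^2/2}$; the condition $\eps\beta^{-1/2} < (\beta+1/2)^{-1/2}$ needed for this second bound is automatic under the stated hypotheses. With $b = a+1 \leq 6.7$ and $a \geq 5.5$, the ratio $(b/a)^2 \leq 1.4$ yields $b^4/(24\beta) \leq 0.117\, b^2$, so the upper-bound exponent is at most $-0.38\, b^2$. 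The desired inequality reduces to $N < 0.98\, b^{-2}\, e^{0.38 b^2 - \eps^2/2}$, and since $0.38 b^2 - 3a^2/8 = 0.005 a^2 + 0.76 a + 0.38 \geq 4$ for $a \geq 5.5$, this is much weaker than the hypothesis $N < \tfrac{\eps}{c-1} e^{3a^2/8}$; the $g$-inequality is never the binding constraint.

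The main obstacle is purely arithmetic: tracking enough decimal slack so that the clean constants $5.5$ and $3/8$ work simultaneously for both inequalities. The $h$-inequality is the tight one; the threshold $c \geq 5.5 + 2\eps$ is what makes the exponential gain $a^2/24$ large enough to dominate the small logarithmic factor $\log((c-1)/a)$ and the $\eps^2/2$ loss in the comparison above.
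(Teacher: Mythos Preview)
Your approach is the same as the paper's: bound $h$ and $g$ via Lemma~\ref{lemma:interaction} and reduce to elementary inequalities in $a=c-1-2\eps$ (which the paper calls $r$). However, there is a systematic arithmetic slip. From $c\geq 5.5+2\eps$ one gets $a=c-1-2\eps\geq 4.5$, \emph{not} $a\geq 5.5$; it is $b=c-2\eps$ that satisfies $b\geq 5.5$. You invoke the wrong bound twice, and it matters in the $g$-step: with $a\geq 4.5$ the ratio $(b/a)^2=(1+1/a)^2$ is at most $(11/9)^2\approx 1.49$, not $1.4$, so your exponent bound $-0.38\,b^2$ must be weakened to roughly $-0.375\,b^2$. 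The separate claim ``$b\leq 6.7$'' is simply false---nothing in the hypotheses bounds $c$ from above---but it is also unnecessary, since $(b/a)^2$ depends only on the lower bound for $a$.

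With these corrections the argument still goes through: the $h$-comparison $\tfrac{c-1}{a}\,e^{a^2/24-\eps^2/2}\geq 1$ holds already at $a=4.5$ (the exponential alone exceeds $2$), and for the $g$-inequality the exponent gap $0.375\,b^2-3a^2/8$ is still linear-plus in $a$ and dominates the polynomial prefactors. But you should make that last comparison explicit: your ``this is much weaker than the hypothesis'' compares only the exponents and ignores that the prefactors $0.98\,b^{-2}$ and $\eps/(c-1)$ scale differently with $a$. The inequality does hold because the exponential gap absorbs the polynomial growth of $b^2/(c-1)$, but as written that step is asserted rather than checked. The paper handles this by packaging everything into a single function $f(r)$ and verifying $f(r)<0$ for all $r\geq 4.5$.
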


\begin{proof}
Let $r := c-1-2\eps$. From the assumptions, we have $r \geq 4.5$, $\beta \geq r^2/2 > 10$, and $\eps < 0.1$. 
We must verify:
\begin{align*}
N &< \frac{h(\eps \beta^{-1/2})}{h(r\beta^{-1/2})}, \\
N &< \frac{g(\eps\beta^{-1/2})}{-g((r+1)\beta^{-1/2})}.
\end{align*}

Using the bounds from Lemma~\ref{lemma:interaction}, these inequalities reduce to:
\[
N < \frac{\exp(-\eps^2/2)\eps\beta^{-1/2}(1-\eps^2/(6\beta))}{\exp(-r^2/2 + r^4/(24\beta))r\beta^{-1/2}},
\]
and
\[
N < \frac{\exp(-\eps^2/2)(1-\eps^2/(2\beta)-\eps^2)}{\exp(-(r+1)^2/2 + (r+1)^4/(24\beta))(r+1)^2}.
\]

Given $N < \exp(3r^2/8)\eps/r$, it suffices to verify:

1. First inequality: Taking logarithms and using $\eps < 0.1$, $\beta > 10$, we need:
\[
-\frac{r^2}{8} + \frac{r^4}{24\beta} + \frac{\eps^2}{2} < -\frac{3}{200}.
\]
Since $\beta \geq r^2/2$, this follows from:
\[
-\frac{r^2}{8} + \frac{r^2}{12} < -\frac{1}{50},
\]
which holds for $r \geq 4.5$.

2. Second inequality: After simplification using $\beta \geq r^2/2$, $\beta \geq 10$, $\eps < 0.1$, we need:
\[
f(r) = \frac{3r^2}{8} - \frac{(r+1)^2}{2} + \frac{(r+1)^4}{12r^2} + \frac{1}{200} + 2\ln(r+1) - \ln(9.8r) < 0.
\]
The derivative
\[
f'(r) = \frac{1}{3}r - 1 + \frac{6r^3}{(r+1)^3(r-1)} + \frac{2}{r+1} - \frac{1}{r} > 0
\]
for $r > 4.5$, as $r > 3$ and $2r > r+1$. Therefore, it suffices to verify $f(4.5) \approx -4.14 < 0$.
\end{proof}

\section{Final configuration}

\subsection{Proof of Lemma~\ref{lemma1}}\label{sec:proof_lem1}
Let us show that trajectories $x(t)$ of our system can be characterized as normalized solutions of a linear homogeneous ODE in $\mathbb{R}^d$. Consider a solution $y(t)$ of:
\begin{equation}
\label{eqn:linear_ode}
\dot y(t) = Vy(t), \qquad y(0) = x(0).
\end{equation}

For $s(t) := y(t)/\|y(t)\|$, we derive:
\[
\dot s(t) = \frac{\dot y(t)}{\|y(t)\|} - \frac{y(t)}{\|y(t)\|^2}\left\langle \frac{y(t)}{\|y(t)\|}, \dot y(t)\right\rangle = Vs(t) - \langle s(t), Vs(t) \rangle s(t) = \mathbf{P}_{s(t)}(Vs(t)).
\]
Thus $x(t) \equiv s(t) = y(t)/\|y(t)\|$.

The solution to \eqref{eqn:linear_ode} has the following explicit form. Let $\{J_k\}$ denote the Jordan blocks of $V$ with: sizes $n_k$, eigenvalues $\lambda_k$, and generalized eigenvectors $\{\xi^k_1, \ldots, \xi^k_{n_k}\}$.

Then:
\begin{equation}
\label{eqn:linear_ode_solution}
y(t) = \sum_{k} e^{\lambda_k t}\sum_{j=1}^{n_k} c^k_j\sum_{i=1}^j \frac{t^{j-i}}{(j-i)!}\xi^k_i,
\end{equation}
where the coefficients $\{c^k_j\}$ satisfy:
\[
\sum_{k} \sum_{j=1}^{n_k} c^k_j\xi^k_j = x(0).
\]

For almost all initial conditions $x(0)$ with respect to the surface measure on the sphere, all coefficients $c^k_j$ are non-zero. For complex eigenvalues $\lambda_k$, we combine conjugate terms to obtain real-valued solutions involving trigonometric functions.

The asymptotic behavior follows from two observations:
(i) Terms with largest $\Re(\lambda_k)$ dominate as $t \to \infty$, corresponding to convergence to $L' \cap S^{d-1}$ at exponential rate, and (ii) Among these terms, those with highest power of $t$ (i.e., $t^{n_k-1}\xi^k_1$ terms) determine the slower convergence to $L \cap S^{d-1}$

\subsection{Proof of Theorem~\ref{thm1}}
\label{sec:proof_thm1}
We begin with a simple geometric lemma.
\begin{lemma}
\label{lemma:scalar}
Let $x, y, z \in \mathbb{R}^d$ with $\|x\| = \|y\| = 1$. If $|\langle y, z \rangle| \leq \langle x, z \rangle$, then:
\[
\langle x, z \rangle \geq \langle x, y\rangle \langle y, z \rangle
\]
with equality if and only if either:
(i) $\langle x, z \rangle = 0$, or
(ii) $|\langle y, z \rangle| = \langle x, z \rangle$ and $\langle x, y\rangle = \text{sign}(\langle y, z \rangle)$.
\end{lemma}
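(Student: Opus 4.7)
The plan is to establish the inequality $\langle x, z\rangle \ge \langle x, y\rangle\langle y, z\rangle$ via a short three-link chain of bounds, and then read off the equality characterization by tracking which links can be tight. The chain I would use is
$$\langle x, y\rangle\langle y, z\rangle \;\le\; |\langle x, y\rangle|\,|\langle y, z\rangle| \;\le\; |\langle y, z\rangle| \;\le\; \langle x, z\rangle,$$
where the first bound is just $a\le|a|$, the second is Cauchy--Schwarz combined with $\|x\|=\|y\|=1$, and the third is the hypothesis $|\langle y, z\rangle|\le\langle x, z\rangle$. This is the entire content of the forward direction.

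For the equality case I would work backwards: if $\langle x, z\rangle = \langle x, y\rangle\langle y, z\rangle$, then all three inequalities above must be equalities simultaneously. I would then split on whether $\langle y, z\rangle$ vanishes. If $\langle y, z\rangle = 0$, then equality in the third bound forces $\langle x, z\rangle = 0$, giving case (i) (and one should also note that, conversely, case (i) together with the hypothesis forces $\langle y, z\rangle = 0$, so both sides vanish). If $\langle y, z\rangle \ne 0$, then equality in the Cauchy--Schwarz step forces $|\langle x, y\rangle| = 1$, so $x = \pm y$; equality in the first step forces $\langle x, y\rangle$ and $\langle y, z\rangle$ to share a common (nonzero) sign, hence $\langle x, y\rangle = \text{sign}(\langle y, z\rangle)$; and equality in the third step gives $|\langle y, z\rangle| = \langle x, z\rangle$, which is exactly case (ii). The converse direction — checking that (i) or (ii) yields equality in the main inequality — is a direct substitution.

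The whole argument is a routine application of Cauchy--Schwarz plus sign bookkeeping, so I do not anticipate any real obstacle. The only subtlety worth care is the boundary between the two cases: when $\langle y, z\rangle = 0$ the quantity $\text{sign}(\langle y, z\rangle)$ is not well defined, so the statement is correctly organized by putting that degenerate situation into case (i) rather than case (ii).
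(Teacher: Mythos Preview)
Your proposal is correct and essentially identical to the paper's proof: both use the three-link chain combining $a\le|a|$, Cauchy--Schwarz (giving $|\langle x,y\rangle|\le 1$), and the hypothesis, and then read off the equality cases from the chain. The only cosmetic difference is the order of the middle two links---the paper writes $|\langle x,y\rangle||\langle y,z\rangle|\le|\langle x,y\rangle|\langle x,z\rangle\le\langle x,z\rangle$ whereas you write $|\langle x,y\rangle||\langle y,z\rangle|\le|\langle y,z\rangle|\le\langle x,z\rangle$---and your equality discussion is more explicit than the paper's one-line remark.
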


\begin{proof}
By the Cauchy-Schwarz inequality and the hypothesis:
\begin{align*}
\langle x, y\rangle \langle y, z \rangle &\leq |\langle x, y\rangle| |\langle y, z \rangle| \leq |\langle x, y\rangle| \langle x, z \rangle \leq \langle x, z \rangle,
\end{align*}
where the last inequality follows since $|\langle x, y\rangle| \leq 1$ for unit vectors.

The equality conditions follow from examining when each inequality becomes equality in the chain above.
\end{proof}


We continue with the proof of Theorem~\ref{thm1}.

\begin{proof}
The system of particles is governed by equations
\[
\dot x_{k} = \frac{1}{Z_{k}} \sum_{j=1}^k e^{\beta \la Q x_{k}, Kx_{j}\ra}(x_{j} - \la x_{j}, x_k\ra x_k),
\]
where we omit $t$ from the notation for simplicity. 

This system is autonomous, so we first explore its critical points and their stability. For autonomous systems with established convergence, it is well-known that for any absolutely continuous initialization, the limiting point is strongly unstable with probability zero (see \cite[Thm. III.7, Ex. III.3]{shub13stability} and \cite[Lemma B.1]{mathpersp23}). Note that the proof in~\cite{mathpersp23} is stated for gradient ascent dynamics but it readily extends to any smooth autonomous dynamics on a compact Riemannian manifold.

Define:
\[
f_k(x) :=  \frac{1}{\sum_{j=1}^k e^{\beta \langle Q x_k, Kx_j \rangle}} \cdot \sum_{j=1}^k e^{\beta \langle Q x_{k}, Kx_{j}\rangle}(x_{j} - \langle x_{j}, x_k\rangle x_k).
\]
We aim to (i) find stationary points $x$ where all $f_k(x) = 0$ and (ii) analyze eigenvalues of the Jacobian $(\frac{\partial f_k}{\partial x_j})$ at said stationary points.

Any critical point must satisfy one of the following:
\begin{itemize}
    \item $ x_1 = \ldots = x_n=\xi$ for some $\xi \in S^{d-1}$,
    \item There exists $s \in \{2, \ldots, n\}$ such that $x_1 = \ldots = x_{s-1}=\xi$ and $x_s = -\xi$.
\end{itemize}

Indeed, if the first condition fails, consider the first token $x_s$ where $x_s \neq x_1$. Then $f_s(x) = 0$ implies $x_s - \langle x_s, \xi \rangle \xi = 0$, forcing $x_s = \pm \xi$ so that $x_s = -\xi$ since we required $x_s \neq x_1$.

Our goal is to show that stationary points of the second kind are limiting points with probability zero with respect to the initialization distribution. Observe that since the system formed by the first $s$ particles is independent of subsequent ones, it suffices to show:
\[
\mathbb{P}((x_1, \ldots, x_{s-1}, x_s) \to (\xi, \ldots, \xi, -\xi)) = 0.
\]

Since $x_1(t) = x_1(0)$, this reduces to:
\[
\mathbb{P}(x_1 = \xi, (x_2, \ldots, x_{s-1}, x_s) \to (\xi, \ldots, \xi, -\xi)) = 0.
\]

By the law of total probability, it suffices to show that for almost all $\xi\in S^{d-1}$:
\[
\mathbb{P}_{x_2, \ldots, x_s | x_1 = \xi}((x_2, \ldots, x_{s-1}, x_s) \to (\xi, \ldots, \xi, -\xi)) = 0.
\]
We are left to the function $f_s$ around $(x_1, \ldots,x_{s-1}, x_s)=(\xi, \ldots, \xi, -\xi)$. Observe that
\[
f_s(\xi, \ldots, \xi, x_s) = w(\xi, x_s)(\xi - \langle \xi, x_s\rangle x_s),
\]
where
\[
w(\xi, x_s) = \frac{(s-1) e^{\beta\langle Qx_s, K\xi \rangle}}{(s-1) e^{\beta \langle Q x_s, K \xi \rangle} + e^{\beta \langle Q x_s, K x_s\rangle}}>0.
\]

Observe that the Jacobian $(\frac{\partial f_k}{\partial x_j})$ is block lower triangular, with blocks given by $\frac{\partial f_k}{\partial x_k}$. We show below that $\frac{\partial f_s}{\partial x_s}$ has an eigenvalue with positive real part, which is sufficient to establish strong instability.

At $x_2 = \ldots = x_{s-1} = \xi$ and $x_s = -\xi$:
\[
f_s(\xi, \ldots, \xi, x_s) = w(\xi, x_s)(\xi - \langle \xi, x_s\rangle x_s),
\]
where
\[
w(\xi, x_s) = \frac{(s-1) e^{\beta\langle Qx_s, K\xi \rangle}}{(s-1) e^{\beta \langle Q x_s, K \xi \rangle} + e^{\beta \langle Q x_s, K x_s\rangle}}>0.
\]

The classical Jacobian in $\mathbb{R}^d$ is:
\[
\frac{\partial}{\partial x_s}f_s(\xi, \ldots, \xi, x_s) = \left(\frac{\partial}{\partial x_s} w(\xi, x_s)\right) (\xi - \langle \xi, x_s \rangle x_s)^{\top} + w(\xi, x_s) \frac{\partial}{\partial x_s} (\xi - \langle \xi, x_s \rangle x_s).
\]
Hence
\[
\frac{\partial}{\partial x_s}f_s(\xi, \ldots, \xi, x_s) \big|_{x_s=-\xi}=  w(\xi, x_s) \frac{\partial}{\partial x_s} (\xi - \langle \xi, x_s \rangle x_s).
\]
The spherical Jacobian is obtained by projecting onto $\xi^\perp$ and given by
\[
(\mathbf{I} - \xi \xi^{\top})\frac{\partial}{\partial x_s}f_s(\xi, \ldots, \xi, x_s) \big|_{x_s=-\xi}=w(\xi, -\xi) \cdot (\mathbf{I} - \xi \xi^{\top}).
\]
This linear operator acts on $\xi^\perp$ and has eigenvalues  $w(\xi, -\xi)$ with multiplicity $d-1$, which are all real positive,  confirming strong instability.

By the center-stable manifold theorem \cite[Thm. III.7, Ex. III.3]{shub13stability}, if a point has at least one eigenvalue with positive real part, then: (i) The center-stable manifold $W^{cs}_{loc}$ has positive co-dimension and (ii) Points converging to this equilibrium must enter $W^{cs}_{loc}$ at some finite time and (iii) the set of such points has measure zero. 

More precisely, if a trajectory $(x_2, \ldots, x_s)$ converges to $(\xi, \ldots, \xi, -\xi)$, then:
\[
\exists\,  m \in \mathbb{Z}_{\geq 0}: (x_2(m), \ldots, x_s(m)) \in W^{cs}_{loc}.
\]

Since our flow is a diffeomorphism, the pre-image of $W^{cs}_{loc}$ is also a manifold of positive codimension. Therefore, the set of initial conditions leading to convergence to $(\xi, \ldots, \xi, -\xi)$ is contained in a countable union of measure-zero sets, making it a measure-zero set itself.

We continue with an induction on the number of particles to show that with probability one, $x_2, \ldots, x_n$ all converge to $\xi$. 

For the base case $k=2$, observe that $x_2$ converges to $\xi$ except when initialized unstable equilibrium $x_2(0) = -\xi$.

Assume next that $x_2, \ldots, x_{k-1} \to \xi$ with probability one so that for any $\eps>0$, there exists a time $T_0$ after which
$$
 \min_{j < k} \langle x_j, \xi \rangle>1-\eps\quad \text{a.s.}
$$
Consider:
\begin{align*}
\frac{d\langle x_k(t), \xi \rangle}{dt} &= \left\langle \mathbf{P}_{x_{k}(t)}\left(\frac{1}{Z_{k}(t)} \sum_{j=1}^{k-1} e^{\beta \langle Q x_{k}(t), Kx_{j}(t)\rangle} x_{j}(t)\right) , \xi \right\rangle\\
    &=\frac{1}{Z_{k}(t)}\left(\sum_{j=1}^{k-1}e^{\beta \la Q x_k(t), K x_j(t)\ra}(\la x_j(t), \xi \ra - \la x_j(t) , x_k(t) \ra \la x_k(t), \xi\ra)\right ).
\end{align*}
From Lemma~\ref{lemma:scalar} we get that $\la x_j(t), \xi \ra - \la x_j(t) , x_k(t) \ra \la x_k(t), \xi\ra>0$ if $\la x_j, \xi \ra >0$ and $|\la x_k, \xi \ra| < \la x_j, \xi \ra$. In particular, we get that the time derivative above is positive after time $T_0$ whenever $-1+\eps < \langle x_k, \xi \rangle < 1-\eps$ since we are guaranteed that $\forall j < k, \, \langle x_j, \xi \rangle >1-\eps >0$ after that time.
But from the center-stable theorem argument, we know that $\langle x_k, \xi \rangle$ does not converge to $-1$ so there exists a time $T_1>T_0$ at which $\langle x_k, \xi \rangle >-1+\eps$. After this time, either $x_k$ gets closer to $\xi$ (positive derivative)  until $\langle x_k , \xi \rangle =1-\eps$ after which time, $\langle x_k , \xi \rangle \ge 1-\eps$ forever. Since this argument is valid, for all $\eps>0$, we get that $x_k \to \xi$. 

By induction, all points converge to $\xi$ with probability one.

\end{proof}

\subsection{Spectrum of pre-trained value matrices V}

To support our theoretical focus on real eigenvalues, we analyze the spectrum of value matrices $V$ across different attention heads in pre-trained transformers. Figure~\ref{fig:spectra} presents these spectra, revealing several key insights:
\begin{enumerate}
    \item The majority of heads exhibit real maximal eigenvalues $\lambda_{\max}$, validating our theoretical emphasis on the real case.
    \item Notably, two heads---1 and 6---display negative $\lambda_{\max}$. This is particularly interesting since standard Gaussian initialization (see Figure~\ref{fig:initial_spectrum}) places the initial spectrum far from the left half-plane. The emergence of negative maximal eigenvalues through training suggests that such configurations may serve specific functional purposes.
\end{enumerate}

These observations have important implications for transformer architecture design, particularly regarding the initialization of value matrices $V$. The presence and apparent utility of negative $\lambda_{\max}$ indicates that alternative initialization schemes, beyond standard Gaussian, might better accommodate the learned spectral properties.


\begin{figure}[b]
\centering
        \includegraphics[width=0.3\linewidth]{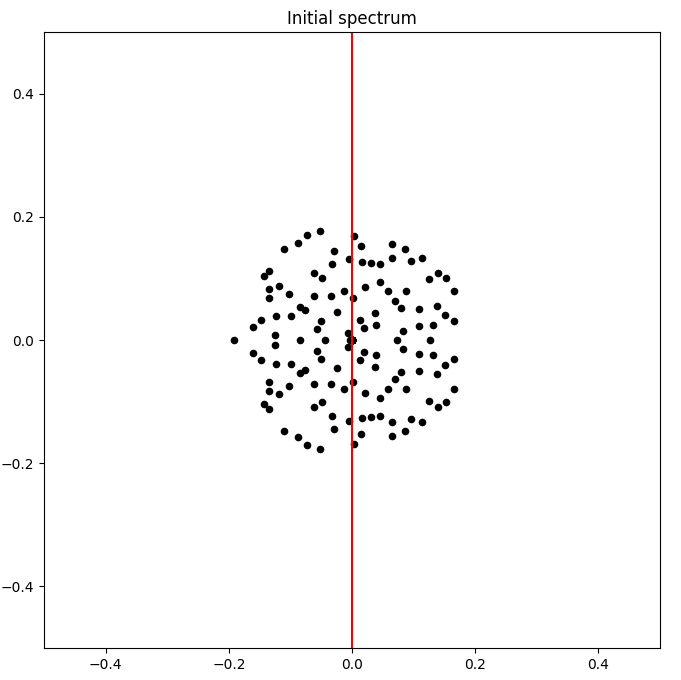}
        \caption{Eigenvalues of a value matrix at initialisation for albert-xlarge-v2.}
        \label{fig:initial_spectrum}
\end{figure}

\begin{figure}[t]
        \centering
        \includegraphics[width=0.7\linewidth]{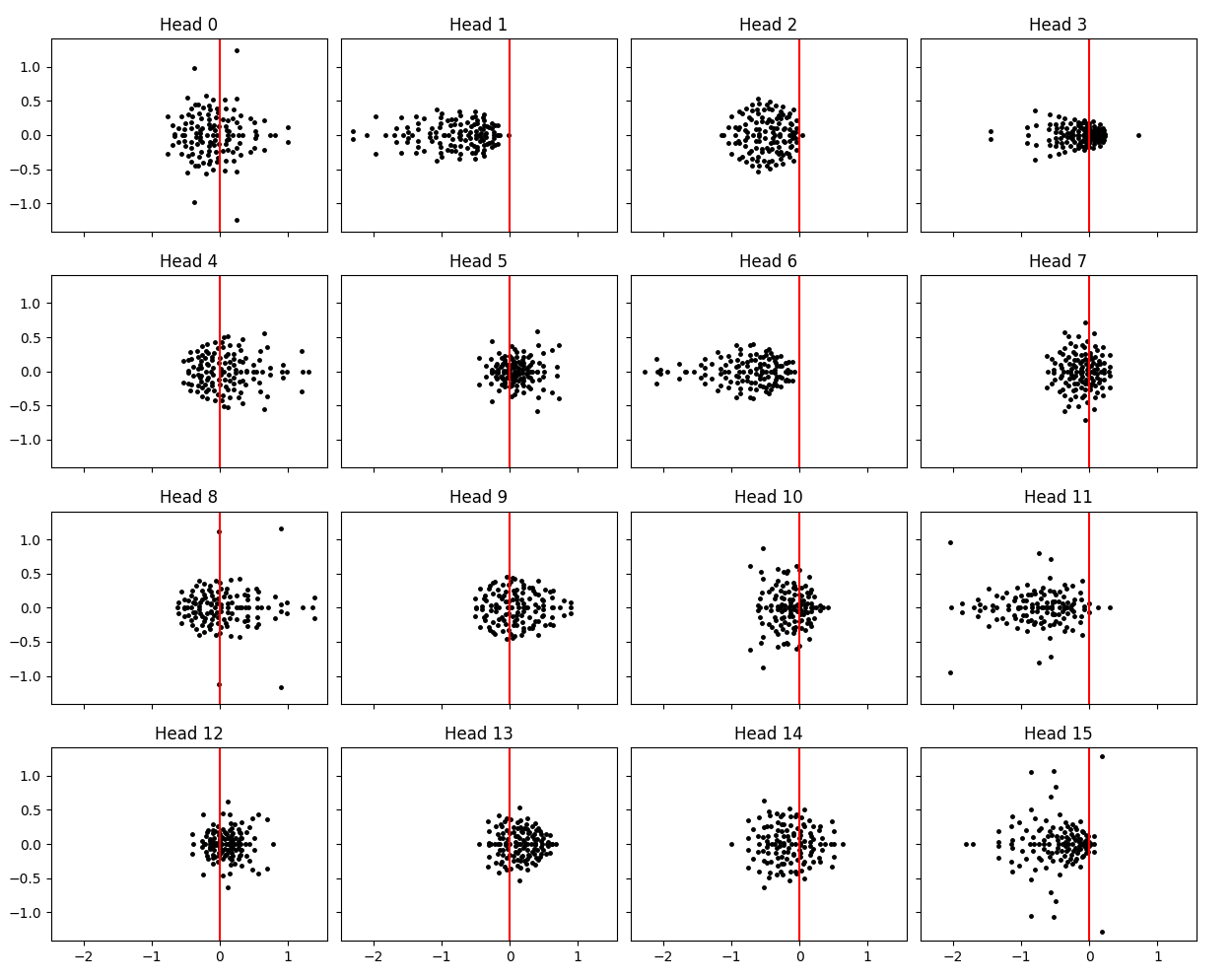}
        \caption{Eigenvalues of the pre-trained albert-xlarge-v2 model value matrices for different heads.}
        \label{fig:spectra}
\end{figure}

\newpage

\section{Meta-stable clustering}

\subsection{Proof of Lemma~\ref{lemma:meta}}
\label{sec:proof_lemma_meta}

\begin{proof}
For $d = 2$, using polar coordinates $x_k = e^{i\phi_k}$, the system becomes:
\begin{equation}
\dot \phi_k = \frac{1}{Z_{k}}\sum_{j=1}^{k-1} h(\phi_j - \phi_k),
\end{equation}
where we recall that $h(x) = e^{\beta(\cos x - 1)}\sin x$ is the function studied in Lemma~\ref{lemma:interaction} and
\[
Z_{k} = \sum_{j=1}^{k} e^{\beta (\cos(\phi_j - \phi_k)-1)} \geq 1
\]
due to the $j = k$ term. 

Let $\phi_{r^*}$ denote the closest preceding particle to $\phi_{s_k}$. Because of periodicity we may assume that $|\phi_j -\phi_k|<\pi$ for all particles.  Without loss of generality, assume $\phi_{r^*} > \phi_{s_k}$ and set $\Delta = \phi_{r^*} - \phi_{s_k}$. 

We first show that $\Delta$ cannot decrease fast. To that end, observe that the evolution of $\Delta$ follows:
\[
- \dot \Delta = \dot \phi_{s_k} - \dot \phi_{r^*} =  \frac{1}{Z_{s_k}}\sum_{j=1}^{s_k-1} h(\phi_j-\phi_{s_k}) + \frac{1}{Z_{r^*}}\sum_{j=1}^{r^*-1} h(\phi_{r^*}-\phi_j),
\]
where we use the fact that $h$ is odd. Recall that $h$ is decreasing on $[\tau_{\beta}^*, \pi]$ so for $\Delta > \tau_{\beta}^*$, we bound each term as follows.

For the first term, by definition of $\Delta$,
\begin{equation}
\label{eq:bddotphi}
\dot\phi_{s_k} \le |\dot\phi_{s_k}|\leq \sum_{j=1}^{s_k-1} e^{\beta ( \cos(\phi_j-\phi_{s_k}) - 1)} |\sin( \phi_{j} - \phi_{s_k})| = \sum_{j=1}^{s_k - 1}h(|\phi_j - \phi_{s_k}|)\le s_k h(\Delta).
\end{equation}

For the second term, observe that $h(\phi_{r^*}-\phi_j)>0$ if and only if $\phi_{r^*}>\phi_j$. But since $\phi_{r^*}$ is larger and closest to $\phi_{s_k}$, we must have that $\phi_{j}<\phi_{s_k} - \Delta <\phi_{r^*}-2\Delta$. Hence the terms that contribute positively to the second term are such that $\phi_{r^*}-\phi_j>2\Delta$. We get
$$
-\dot\phi_{r^*} \le \sum_{j=1}^{r^*-1} h(\phi_{r^*}-\phi_j)\mathbf{1}_{\phi_{r^*}-\phi_j>2\Delta} \le r^* h(2\Delta)\le s_kh(\Delta).
$$

Combining the last three displays, we get the following upper bound on the rate of decay of $\Delta$:
\[
-\dot \Delta \leq 2s_k h(\Delta) \quad \text{ for }\quad  \Delta > \tau_{\beta}^*.
\]
Using this, we can show that that $\Delta(t) > c\beta^{-1/2}$ for $t \in [0, T_k]$, by checking that (i) $c\beta^{-1/2} > \tau_{\beta}^*$, which is precisely condition~\ref{eq:c}, and (ii) $\Delta(0) > c\beta^{-1/2} + 2s_{k}T_k h(c\beta^{-1/2})$, which follows readily from the Renyi center condition~\eqref{eq:renyi} together with~\eqref{eq:time_bd}.

To conclude the proof of the Lemma, note that
$$
|\phi_{s_k}(t) - \phi_{s_k}(0)| = |\int_0^t \dot \phi_{s_k}(u) du|  \le t \max_{u}|\dot \phi_{s_k}(u)| \le t s_k h(\Delta)\,,
$$
where we used~\eqref{eq:bddotphi} in the last inequality.

It yields 
\[
\max_{t \in [0, T_k]} |\phi_{s_k}(t) - \phi_{s_k}(0)| < s_{k}T_k h(c\beta^{-1/2}) < \eps c\beta^{-1/2}
\]
by~\eqref{eq:time_bd}.
\end{proof}

\subsection{Proof of Lemma~\ref{lemma:convergence}}
\label{sec:proof_lemma_convergence}
\begin{proof}
 We prove that the particles converge exponentially fast to some strongly stable critical point $\phi^*$ via induction on their number.

\textit{Induction base.} In this proof the induction base for $n=1$ follows from the induction step proof applied to the first particle $\phi_1$.

\textit{Induction step.} Consider $\phi_1(t), \ldots, \phi_{n-1}(t)$. They do not depend on $\phi_n$ and from induction hypothesis converge exponentially fast to some asymptotically stable point $\phi_1^*, \ldots, \phi_{n-1}^*$. In particular, one has $\dot \phi_k \in L_1([0, \infty)), k\in[n-1]$. 

Consider full derivative
\[
\frac{d E_n(\phi(t))}{dt} = \sum_{j = 1}^{n-1} \frac{\partial E_n}{\partial \phi_j} \dot \phi_j(t) + \frac{\partial E_n}{\partial \phi_n} \dot \phi_n(t) = \sum_{j = 1}^{n-1} \frac{\partial E_n}{\partial \phi_j} \dot \phi_j(t) - Z_n(t) |\dot \phi_n(t)|^2.
\]
Since all partial derivatives $\partial E_n / \partial \phi_j$ are bounded on the compact manifold and all derivatives $\dot \phi_j, j<n$ are in $L_1([0, \infty))$, one has
\[
g(t) :=  \sum_{j=1}^{n-1}\frac{\partial E_n}{\partial \phi_j}\dot \phi_j(t) \in L_{1}([0,\infty)).
\]
For the base case, i.e. $n = 1$, one simply has $g(t) \equiv 0$.
Let us show that $f(t) := \frac{d E_n(\phi(t))}{dt}$ is also in $L_{1}([0,\infty))$. Notice how $f(t)$ has a finite integral on any interval
\begin{equation}
    \label{eq:integral}
\int_a^b f(t) dt = E_n(\phi(a)) - E_n(\phi(b)) < C< \infty
\end{equation}
and is upper bounded
\[
f(t) \leq g(t) \in L_{1}([0,\infty)).
\]
Consider $f_{+} = f(t) \mathbf{1}_{f(t)>0}$, $f = f_{+} - f_{-}$. Clearly $f_{+} \in L_{1}([0, \infty))$ as it is upper bounded by $|g| \in L_{1}([0, \infty))$. Moreover, the bound~\eqref{eq:integral} shows that for all $T > 0$ one has
\[
\int_0^T f_{-}(s)ds \leq \int_0^T f_{+}(s) ds + C  < \|f_{+}\|_{L_1([0, \infty))} + C.
\]
From this we get that $f_{-} \in L_1([0, \infty))$. Consequently, $f \in L_1([0, \infty))$. Since
\[
Z_n(t)|\dot \phi_n|^2 = g(t) - f(t) \in L_1([0, \infty)),
\]
and $Z_n(t)$ has a uniform lower bound, we obtain $\dot \phi_n \in L_2([0, \infty))$. 

Let us show that $\dot \phi_n$ is absolutely continuous. The trajectory of all particles $\phi(t)$ is absolutely continuous, because $\dot \phi(t)$ is bounded as a continuous vector field on a compact manifold. Then, $\dot \phi_n(t)$ is absolutely continuous, because it is a composition of absolutely continuous vector field and an absolute continuous trajectory $\phi(t)$. 

Because $\dot \phi \in L_2([0, \infty))$ and $\dot \phi_n$ is absolutely continuous, it satisfies
\[
\limsup_{t\to\infty} |\dot \phi_n(t)| = 0.
\]
In other words, because of the upper bound $Z_n(t) < C$, one has
\begin{equation}
    \label{eq:critical}
\lim_{t\to \infty} \left|\frac{\partial}{\partial \phi_n} E_n(\phi_1(t), \ldots, \phi_n(t))   \right| = 0.
\end{equation}
Finally, we are going to prove that $\phi_n$ converges to some critical point $\phi_n^*$ with 
\[
\frac{\partial}{\partial \phi_n} E_n(\phi_1^*, \ldots, \phi_n^*) = 0.
\]
Consider the set $\mathcal{E} = \{\psi: \frac{\partial}{\partial \phi_n} E(\phi_1^*, \ldots, \phi_{n-1}^*, \psi) = 0\}$. By assumption, the energy function $E_n(\phi_1^*, \ldots, \phi_{n-1}^*, \psi)$ has isolated critical points w.r.t. $\psi$, thus the set of zeroes $\mathcal{E}$ is a finite collection of points $\mathcal{E} =\{ \psi_1, \ldots, \psi_m\}$. 

When $\phi_1, \ldots, \phi_{n-1}$ converge to $\phi_1^*, \ldots, \phi_{n-1}^*$ the set 
\[
\left \{ \phi_n : \left |\frac{\partial}{\partial \phi_n} E(\phi_1, \ldots, \phi_n)\right| < \eps \right\}
\]
is inside a collection of distinct intervals around $\psi_1, \ldots, \psi_m$ for small enough $\eps$. Therefore, due to~\eqref{eq:critical}, from some moment $\phi_n(t)$ stays only in one of those intervals. They collapse into points as we take $\eps \to 0$ and $t\to \infty$, proving that $\phi_n$ converges to some $\phi_n^* \in \mathcal{E}$. It remains to prove that observed convergence is to a strongly stable point.

We know that the limiting point is critical and for almost any initialisation it is not strongly unstable. This is a well-known fact that for autonomous systems convergence to a strongly unstable point happens with probability zero, that we used in Section~\ref{sec:proof_thm1}. One could find it in \cite[Lemma B.1]{mathpersp23}, based on the center manifold theorem \cite[Thm. III.7, Ex. III.3]{shub13stability}. Therefore, from the assumption on critical points, the limiting point is strongly stable. Then, the convergence happens exponentially fast, because locally the whole neighbourhood of a strongly stable point is its stable manifold $W^{s}_{loc}$, see \cite[Thm. III.7, Ex. III.3]{shub13stability}.
\end{proof}

\subsection{Proof of Theorem~\ref{thm:fixed_centers}}
\label{sec:proof_thm_fixed_centers}
\begin{proof}
Our prove consists of two parts. In order to obtain convergence we are going to apply Lemma~\ref{lemma:convergence}. The system has exactly gradient-like form for energy functionals 
\[
E_k(\phi_1, \ldots, \phi_k) =  -\left(\sum_{j=1}^{k-1} e^{\beta (\cos(\phi_k - \phi_j) - 1)} + \sum_{j=1}^m a_j e^{\beta (\cos(\phi_k - \theta_j)-1)}\right)
\]
and bounded normalization factors
\[
Z_k(\phi_1, \ldots, \phi_k) = \sum_{j=1}^k e^{\beta( \cos(\phi_k - \phi_j)-1)} + \sum_{j=1}^m a_j e^{\beta (\cos(\phi_k - \theta_j) - 1)}.
\]
Therefore, if conditions of Lemma~\ref{lemma:convergence} are satisfied, we have convergence to a strongly stable critical point. In that case, it is sufficient to prove that for any strongly stable critical point $\phi^*$, all particles $\phi_k^*$ are $\eps \beta^{-1/2}$-close to one of the centers $\theta_j$. We show this via induction on $k$, because new particles do not affect the movement of the previous ones. 

To justify our use of Lemma~\ref{lemma:convergence}, we need to check that all critical points are either strongly stable or strongly unstable, i.e. that the Jacobian at any critical point with only non-positive eigenvalues actually has no zero eigenvalues. Moreover, for our conclusion, we need to check that if all the eigenvalues are negative, then the critical point is clustered around $\theta$. Let us start with that.

In what follows we repeatedly use properties of $h$ and $g$ from Lemma~\ref{lemma:interaction}. 

The system is of the form
\[
\dot \phi_k = -\frac{1}{Z_k}\left( \sum_{j=1}^{k-1} h(\phi_k - \phi_j) + \sum_{j=1}^m a_j h(\phi_k - \theta_j)\right)=: f_k(\phi, \theta).
\]
Since $f_k$ does not depend on $\phi_{k+1},\ldots, \phi_n$, the Jacobian $(\frac{\partial f_k}{\partial \phi_j})$ is lower-triangular. Therefore, its eigenvalues are $\frac{\partial f_k}{\partial \phi_k}$. Let us assume that all of the eigenvalues are non-positive. Since at a critical point $f_k(\phi^*, \theta)$ itself is zero, one has
\[
\frac{\partial f_k}{\partial \phi_k}(\phi^*, \theta) = -\frac{1}{Z_k}\left(\sum_{j=1}^{k-1}g(\phi_k^* - \phi_j^*) + \sum_{j=1}^m a_j  g(\phi_k^* - \theta_j)\right) \leq 0.
\]
This implies that one of the terms $g(\phi_k^* - \phi_j^*)$ or $g(\phi_k^* - \theta_j)$ is positive. From properties of $g$ in Section~\ref{sec:energy_function}, we obtain that $\phi_k^*$ is $\tau_{\beta}^*$-close to either one of the centers $\theta_j$ or one of the previous particles $\phi_j^*$. By induction hypothesis, all previous particles are $\eps \beta^{-1/2}$-close to the centers, i.e. $\phi_k^*$ is in $\tau_{\beta}^* + \eps \beta^{-1/2}$-neighbourhood of some center, wlog $\theta_1$.

Let us assume that $\phi_k^*$ is $\eps \beta^{-1/2}$-far from $\theta_1$. From criticality of the point, it is true that
\[
\sum_{j=1}^{k-1} h(\phi_k^* - \phi_j^*) + \sum_{j=1}^m a_j h(\phi_k^* - \theta_j) = 0,
\]
By induction hypothesis, all previous particles are $\eps \beta^{-1/2}$-close to some center. Denote the set of particles that are close to $\theta_1$ as $S$. Then the criticality can be written as
\[
a_1 h(\phi_k^* - \theta_1) + \sum_{j \in S} h(\phi_k^* - \phi_j^*) = -\sum_{j \notin S} h(\phi_k^* - \phi_j^*) - \sum_{j=2}^m a_j h(\phi_k^* - \theta_j).
\]
All of the terms on the left hand side have the same sign, because $\phi_j^*, j\in S$ are $\eps \beta^{-1/2}$-close to $\theta_1$ and $\phi_k^*$ is $(\tau_{\beta}^* + \eps\beta^{-1/2})$-close to $\theta_1$. Any other particle/center is at least $(c - 2\eps) \beta^{-1/2} - \tau_\beta^*$ far from $\phi_k^*$, because centers are $c \beta^{-1/2}$-separated. The absolute value of the l.h.s. can be lower bounded
\[
|a_1 h(\phi_k^*) + \sum_{j \in S} h(\phi_k^* - \phi_j^*)| \geq h(\eps \beta^{-1/2}).
\]
The absolute value of the r.h.s. can be upper bounded
\[
|\sum_{j \notin S} h(\phi_k^* - \phi_j^*) + \sum_{j=2}^m a_j h(\phi_k^* - \theta_j) | \leq
N h((c - 2\eps) \beta^{-1/2} - \tau_\beta^*) < N h((c - 1 - 2\eps)\beta^{-1/2}),
\]
where in the last part we used the fact from Section~\ref{sec:energy_function} that $\tau_{\beta}^* < \beta^{-1/2}$.

Therefore, one has
\[
N h((c - 1 - 2\eps) \beta^{-1/2}) \geq h(\eps \beta^{-1/2}),
\]
which contradicts our assumption on system parameters.

It remains to check that when the eigenvalues of the Jacobian are not positive, they are actually strictly negative. We already know that in that case all the particles are clustered around centers $\theta_j$. Let us assume that 
\[
\frac{\partial f_k}{\partial \phi_k} (\phi^*, \theta) = 0.
\]
We know that $\phi_k^*$ is $\eps \beta^{-1/2}$-close to some center, wlog $\theta_1$. Therefore, 
\[
g(\phi_k^* - \theta_1) \geq g(\eps \beta^{-1/2}).
\]
On the other hand, the only negative terms in the sum
\[
\sum_{j=1}^{k-1}g(\phi_k^* - \phi_j^*) + \sum_{j=1}^m a_j  g(\phi_k^* - \theta_j) = 0
\]
are from particles that are not in the neighbourhood of $\theta_1$, so they are at least $(c - 2\eps)\beta^{-1/2}$-far from $\phi_k$. Therefore, for the negative terms to balance the positive $g(\phi_k^* - \theta_1)$, it should be true that
\[
g(\eps \beta^{-1/2}) < -  N g((c-2\eps)\beta^{-1/2}) .
\]
This contradicts the parameters choice. 

\end{proof}

\subsection{R\'enyi centers vs strong R\'enyi centers}
\label{sec:R\'enyi}
	In this section we estimate the number of clusters our approach predicts. As a reminder, for a sequence of particles $X_i$ on a unit sphere, and geodesic distance $\textsf{dist}$, we consider two subsequences 
    \begin{itemize}
        \item R\'enyi centers is $X_{s_j}, j\geq 1$ such that $\forall j\min_{i < j} \textsf{dist}(X_{s_j}, X_{s_i}) > \delta$,
        \item strong R\'enyi centers is $X_{s_j}, j\geq 1$ such that $\forall j \min_{i<s_j} \textsf{dist}(X_{s_j}, X_i) > \delta$.
    \end{itemize}
    Estimating the number of elements in the first sub-sequence is well-known as famous R\'enyi parking problem \citet{Renyi58}. In the case $d=2$, the result from \citet{Dvoretzky64} implies that as $\delta \to 0$, the average number of elements in the sequence approaches $c 2\pi/\delta$ superexponentially fast, where $c\approx 0.75$ is the R\'enyi constant. However, this problem becomes significantly harder in higher dimensions. In contrast, it is easy to compute the average number of elements in the second sequence in any dimension and even for a wider class of distributions.
	
	\begin{lemma}
	In an infinitely long sequence $X_i$ the average number of variables $X_{s_j}$ chosen by strong R\'enyi parking is the inverse spherical cap surface area $1/\sigma^{d-1}(B_{\delta})$. In particular, it grows as $1/\delta^{d-1}$ with dimension and can be computed directly in lower dimensions
    \begin{equation*}
        1/\sigma^{d-1}(B_{\delta}) = 
        \begin{cases}
        \pi \delta^{-1}, d = 2
        \\ (3\sin^2 (\delta/2))^{-1}, d = 3
        \end{cases}
    \end{equation*}
	\end{lemma}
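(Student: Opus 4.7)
The plan is to express the expected count as a sum of per-index probabilities that $X_k$ becomes a strong R\'enyi center, and then to exploit the i.i.d.\ structure to reduce everything to a single geometric-series calculation. First, I would set $I_k := \mathbf{1}\{X_i \notin B_\delta(X_k)\text{ for all }i<k\}$ so that the number of strong R\'enyi centers is $N = \sum_{k\geq 1}I_k$; by monotone convergence, $\mathbb{E}[N] = \sum_{k\geq 1}\mathbb{E}[I_k]$.

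Next, I would condition on $X_k$ and use the assumption that the $X_i$'s are i.i.d.\ from some distribution $\mu$ on the sphere. Because none of $X_1,\ldots,X_{k-1}$ is allowed to fall in the geodesic ball $B_\delta(X_k)$, the event factorizes as
\begin{equation*}
\mathbb{E}[I_k\mid X_k] \;=\; \prod_{i<k}\mathbb{P}\bigl(X_i\notin B_\delta(X_k)\,\big|\,X_k\bigr) \;=\; \bigl(1-\mu(B_\delta(X_k))\bigr)^{k-1}.
\end{equation*}
When $\mu$ is the uniform measure on $\mathbb{S}^{d-1}$, the inner quantity $\mu(B_\delta(X_k)) = \sigma^{d-1}(B_\delta)$ is a constant independent of $X_k$, so summing the geometric series gives $\mathbb{E}[N] = 1/\sigma^{d-1}(B_\delta)$. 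The same computation for a general law $\mu$ yields the closed form $\mathbb{E}[N] = \mathbb{E}_{X\sim\mu}\bigl[1/\mu(B_\delta(X))\bigr]$, which is precisely why the strong-R\'enyi count is tractable across a much wider class of distributions than the classical R\'enyi parking count.

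Finally, for the explicit formulas in low dimensions I would just compute the normalized geodesic-cap measure directly: on $\mathbb{S}^1$, $B_\delta$ is an arc of length $2\delta$, so $\sigma^1(B_\delta) = \delta/\pi$ and the reciprocal is $\pi/\delta$; on $\mathbb{S}^2$, the classical cap area formula gives $\sigma^2(B_\delta)$ proportional to $1-\cos\delta = 2\sin^2(\delta/2)$, from which the stated expression in the lemma follows after matching normalization conventions. There is no real obstacle to this argument --- everything reduces to one conditioning step plus a geometric series --- and the only point requiring any care is to confirm that $\sigma^{d-1}(B_\delta) > 0$ so the geometric series converges, which is automatic whenever $\delta>0$.
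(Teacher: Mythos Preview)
Your proposal is correct and follows essentially the same route as the paper: condition on $X_k$, use the i.i.d.\ structure to factorize the avoidance event into $(1-\mu(B_\delta(X_k)))^{k-1}$, sum the resulting geometric series, and arrive at $\int_{\mathbb{S}^{d-1}}\mu(B_\delta(x))^{-1}\,d\mu(x)$, which collapses to $1/\sigma^{d-1}(B_\delta)$ under rotational invariance. The only cosmetic difference is that you frame the argument via indicator variables and monotone convergence, whereas the paper integrates directly via the law of total probability; the substance is identical.
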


	\begin{proof}
	Consider a sequence of $i.i.d.$ points $X_i$ being sampled on a $d$-dimensional sphere $\mathbb{S}^{d-1}$ according to some distribution $\mu$. Let us find the probability that $k$-th particle $X_k$ is chosen by strong R\'enyi parking with distance $\delta$. This event can be written as
	\[
	\P[\cap_{j =1}^{k-1} \{ \textsf{dist}(X_k, X_j) > \delta\}] = \int_{\mathbb{S}^{d-1}}\P[\cap_{j=1}^{k-1} \{ \textsf{dist}(x, X_j) > \delta\}]d\mu(x),
	\]
	where we used total probability formula. Then, since $X_j$ are i.i.d., we can write it as
	\[
	 \int_{\mathbb{S}^{d-1}} \P(\textsf{dist}(x, X_1) > \delta)^{k-1} d\mu(x) =  \int_{\mathbb{S}^{d-1}} (1 - \mu(B_{\delta}(x)))^{k-1} d\mu(x) .
	\]
	From this we obtain that the average number of chosen points	is equal to
	\[
	\int_{\mathbb{S}^{d-1}} \sum_{k=1}^\infty \P(\textsf{dist}(x, X_1) > \delta)^{k-1} d\mu(x) = \int_{\mathbb{S}^{d-1}}\frac{1}{\mu(B_{\delta}(x))} d\mu(x) = \frac{1}{\sigma^{d-1}(B_{\delta})},
	\]
	where the last equality is correct for any spherically harmonic distribution $\mu$, in particular for a uniform measure.
	\end{proof}

\ifnips
\newpage
\section*{NeurIPS Paper Checklist}

\begin{enumerate}

\item {\bf Claims}
    \item[] Question: Do the main claims made in the abstract and introduction accurately reflect the paper's contributions and scope?
    \item[] Answer: \answerYes{} 
    \item[] Justification: The work is dedicated to causal self-attention from interacting particles point of view. Asymptotic convergence for arbitrary key-query matrices is proven in Theorem~\ref{thm1}, final configuration for non-identity Value matrices is presented in Table~\ref{tab:eigenvalues} and Figure~\ref{fig:Value_matrices}. Connection with R\'enyi parking is shown in Section~\ref{sec:meta-clustering} and properties of meta-stable clusters are proven in Lemma~\ref{lemma:meta} and Theorem~\ref{thm:fixed_centers}.
    \item[] Guidelines:
    \begin{itemize}
        \item The answer NA means that the abstract and introduction do not include the claims made in the paper.
        \item The abstract and/or introduction should clearly state the claims made, including the contributions made in the paper and important assumptions and limitations. A No or NA answer to this question will not be perceived well by the reviewers. 
        \item The claims made should match theoretical and experimental results, and reflect how much the results can be expected to generalize to other settings. 
        \item It is fine to include aspirational goals as motivation as long as it is clear that these goals are not attained by the paper. 
    \end{itemize}

\item {\bf Limitations}
    \item[] Question: Does the paper discuss the limitations of the work performed by the authors?
    \item[] Answer: \answerYes{} 
    \item[] Justification: Significant limitations are discussed in Section~\ref{sec:limitations}. As mentioned in Section~\ref{sec:final_config}, provided approach to asymptotical convergence is generally close, but insufficient to prove all results listed in Table~\ref{tab:eigenvalues}, that is why those results are only conjectured. 
    \item[] Guidelines:
    \begin{itemize}
        \item The answer NA means that the paper has no limitation while the answer No means that the paper has limitations, but those are not discussed in the paper. 
        \item The authors are encouraged to create a separate "Limitations" section in their paper.
        \item The paper should point out any strong assumptions and how robust the results are to violations of these assumptions (e.g., independence assumptions, noiseless settings, model well-specification, asymptotic approximations only holding locally). The authors should reflect on how these assumptions might be violated in practice and what the implications would be.
        \item The authors should reflect on the scope of the claims made, e.g., if the approach was only tested on a few datasets or with a few runs. In general, empirical results often depend on implicit assumptions, which should be articulated.
        \item The authors should reflect on the factors that influence the performance of the approach. For example, a facial recognition algorithm may perform poorly when image resolution is low or images are taken in low lighting. Or a speech-to-text system might not be used reliably to provide closed captions for online lectures because it fails to handle technical jargon.
        \item The authors should discuss the computational efficiency of the proposed algorithms and how they scale with dataset size.
        \item If applicable, the authors should discuss possible limitations of their approach to address problems of privacy and fairness.
        \item While the authors might fear that complete honesty about limitations might be used by reviewers as grounds for rejection, a worse outcome might be that reviewers discover limitations that aren't acknowledged in the paper. The authors should use their best judgment and recognize that individual actions in favor of transparency play an important role in developing norms that preserve the integrity of the community. Reviewers will be specifically instructed to not penalize honesty concerning limitations.
    \end{itemize}

\item {\bf Theory Assumptions and Proofs}
    \item[] Question: For each theoretical result, does the paper provide the full set of assumptions and a complete (and correct) proof?
    \item[] Answer: \answerYes{} 
    \item[] Justification: The proofs of all claimed results can be found in appendix. When stated, every result refers to the section that is dedicated to its proof. All proofs are well-structured, rely on known results and have all the assumptions clearly stated.
    \item[] Guidelines:
    \begin{itemize}
        \item The answer NA means that the paper does not include theoretical results. 
        \item All the theorems, formulas, and proofs in the paper should be numbered and cross-referenced.
        \item All assumptions should be clearly stated or referenced in the statement of any theorems.
        \item The proofs can either appear in the main paper or the supplemental material, but if they appear in the supplemental material, the authors are encouraged to provide a short proof sketch to provide intuition. 
        \item Inversely, any informal proof provided in the core of the paper should be complemented by formal proofs provided in appendix or supplemental material.
        \item Theorems and Lemmas that the proof relies upon should be properly referenced. 
    \end{itemize}

    \item {\bf Experimental Result Reproducibility}
    \item[] Question: Does the paper fully disclose all the information needed to reproduce the main experimental results of the paper to the extent that it affects the main claims and/or conclusions of the paper (regardless of whether the code and data are provided or not)?
    \item[] Answer: \answerYes{} 
    \item[] Justification: The only experiments are numerical models of considered dynamics presented in Figures~\ref{fig:Value_matrices},~\ref{fig:meta-clustering} and~\ref{fig:dynamical_system_evolution}. Their main purpose is to support what is claimed visually. These calculations can be easily reproduced, as all the parameters used are listed and the system is not computationally complicated. 
    \item[] Guidelines:
    \begin{itemize}
        \item The answer NA means that the paper does not include experiments.
        \item If the paper includes experiments, a No answer to this question will not be perceived well by the reviewers: Making the paper reproducible is important, regardless of whether the code and data are provided or not.
        \item If the contribution is a dataset and/or model, the authors should describe the steps taken to make their results reproducible or verifiable. 
        \item Depending on the contribution, reproducibility can be accomplished in various ways. For example, if the contribution is a novel architecture, describing the architecture fully might suffice, or if the contribution is a specific model and empirical evaluation, it may be necessary to either make it possible for others to replicate the model with the same dataset, or provide access to the model. In general. releasing code and data is often one good way to accomplish this, but reproducibility can also be provided via detailed instructions for how to replicate the results, access to a hosted model (e.g., in the case of a large language model), releasing of a model checkpoint, or other means that are appropriate to the research performed.
        \item While NeurIPS does not require releasing code, the conference does require all submissions to provide some reasonable avenue for reproducibility, which may depend on the nature of the contribution. For example
        \begin{enumerate}
            \item If the contribution is primarily a new algorithm, the paper should make it clear how to reproduce that algorithm.
            \item If the contribution is primarily a new model architecture, the paper should describe the architecture clearly and fully.
            \item If the contribution is a new model (e.g., a large language model), then there should either be a way to access this model for reproducing the results or a way to reproduce the model (e.g., with an open-source dataset or instructions for how to construct the dataset).
            \item We recognize that reproducibility may be tricky in some cases, in which case authors are welcome to describe the particular way they provide for reproducibility. In the case of closed-source models, it may be that access to the model is limited in some way (e.g., to registered users), but it should be possible for other researchers to have some path to reproducing or verifying the results.
        \end{enumerate}
    \end{itemize}

\item {\bf Open access to data and code}
    \item[] Question: Does the paper provide open access to the data and code, with sufficient instructions to faithfully reproduce the main experimental results, as described in supplemental material?
    \item[] Answer: \answerNA{} 
    \item[] Justification: The only experiments are numerical modelings of the system, that were used to support conjectures in Section~\ref{sec:final_config} and show performance of predicted meta-stable clustering centers in Figure~\ref{fig:meta-clustering}. They are not claimed as main results of the paper and if needed the code for creating the pictures is easy to reproduce.
    \item[] Guidelines:
    \begin{itemize}
        \item The answer NA means that paper does not include experiments requiring code.
        \item Please see the NeurIPS code and data submission guidelines (\url{https://nips.cc/public/guides/CodeSubmissionPolicy}) for more details.
        \item While we encourage the release of code and data, we understand that this might not be possible, so “No” is an acceptable answer. Papers cannot be rejected simply for not including code, unless this is central to the contribution (e.g., for a new open-source benchmark).
        \item The instructions should contain the exact command and environment needed to run to reproduce the results. See the NeurIPS code and data submission guidelines (\url{https://nips.cc/public/guides/CodeSubmissionPolicy}) for more details.
        \item The authors should provide instructions on data access and preparation, including how to access the raw data, preprocessed data, intermediate data, and generated data, etc.
        \item The authors should provide scripts to reproduce all experimental results for the new proposed method and baselines. If only a subset of experiments are reproducible, they should state which ones are omitted from the script and why.
        \item At submission time, to preserve anonymity, the authors should release anonymized versions (if applicable).
        \item Providing as much information as possible in supplemental material (appended to the paper) is recommended, but including URLs to data and code is permitted.
    \end{itemize}

\item {\bf Experimental Setting/Details}
    \item[] Question: Does the paper specify all the training and test details (e.g., data splits, hyperparameters, how they were chosen, type of optimizer, etc.) necessary to understand the results?
    \item[] Answer: \answerNA{} 
    \item[] Justification: The paper does not include experiments.
    \item[] Guidelines:
    \begin{itemize}
        \item The answer NA means that the paper does not include experiments.
        \item The experimental setting should be presented in the core of the paper to a level of detail that is necessary to appreciate the results and make sense of them.
        \item The full details can be provided either with the code, in appendix, or as supplemental material.
    \end{itemize}

\item {\bf Experiment Statistical Significance}
    \item[] Question: Does the paper report error bars suitably and correctly defined or other appropriate information about the statistical significance of the experiments?
    \item[] Answer: \answerNo{} 
    \item[] Justification: The paper does not include experiments.
    \item[] Guidelines:
    \begin{itemize}
        \item The answer NA means that the paper does not include experiments.
        \item The authors should answer "Yes" if the results are accompanied by error bars, confidence intervals, or statistical significance tests, at least for the experiments that support the main claims of the paper.
        \item The factors of variability that the error bars are capturing should be clearly stated (for example, train/test split, initialization, random drawing of some parameter, or overall run with given experimental conditions).
        \item The method for calculating the error bars should be explained (closed form formula, call to a library function, bootstrap, etc.)
        \item The assumptions made should be given (e.g., Normally distributed errors).
        \item It should be clear whether the error bar is the standard deviation or the standard error of the mean.
        \item It is OK to report 1-sigma error bars, but one should state it. The authors should preferably report a 2-sigma error bar than state that they have a 96\% CI, if the hypothesis of Normality of errors is not verified.
        \item For asymmetric distributions, the authors should be careful not to show in tables or figures symmetric error bars that would yield results that are out of range (e.g. negative error rates).
        \item If error bars are reported in tables or plots, The authors should explain in the text how they were calculated and reference the corresponding figures or tables in the text.
    \end{itemize}

\item {\bf Experiments Compute Resources}
    \item[] Question: For each experiment, does the paper provide sufficient information on the computer resources (type of compute workers, memory, time of execution) needed to reproduce the experiments?
    \item[] Answer: \answerNA{} 
    \item[] Justification: The paper does not include experiments.
    \item[] Guidelines:
    \begin{itemize}
        \item The answer NA means that the paper does not include experiments.
        \item The paper should indicate the type of compute workers CPU or GPU, internal cluster, or cloud provider, including relevant memory and storage.
        \item The paper should provide the amount of compute required for each of the individual experimental runs as well as estimate the total compute. 
        \item The paper should disclose whether the full research project required more compute than the experiments reported in the paper (e.g., preliminary or failed experiments that didn't make it into the paper). 
    \end{itemize}
    
\item {\bf Code Of Ethics}
    \item[] Question: Does the research conducted in the paper conform, in every respect, with the NeurIPS Code of Ethics \url{https://neurips.cc/public/EthicsGuidelines}?
    \item[] Answer: \answerYes{} 
    \item[] Justification: Conducted research conforms with the Code of Ethics. 
    \item[] Guidelines:
    \begin{itemize}
        \item The answer NA means that the authors have not reviewed the NeurIPS Code of Ethics.
        \item If the authors answer No, they should explain the special circumstances that require a deviation from the Code of Ethics.
        \item The authors should make sure to preserve anonymity (e.g., if there is a special consideration due to laws or regulations in their jurisdiction).
    \end{itemize}

\item {\bf Broader Impacts}
    \item[] Question: Does the paper discuss both potential positive societal impacts and negative societal impacts of the work performed?
    \item[] Answer: \answerNA{} 
    \item[] Justification: The paper is focused on theoretical research and does not provide any direct improvement of existing models, thus it has no way of deployment that could somehow have harmful societal impact. In general, this work aims to improve our theoretical understanding of Transformers, that are prevalent in modern state-of-the-art AI models. In this regard, it could lead to improved long-term development of the field, that we see to be beneficial for the society.
    
    \item[] Guidelines:
    \begin{itemize}
        \item The answer NA means that there is no societal impact of the work performed.
        \item If the authors answer NA or No, they should explain why their work has no societal impact or why the paper does not address societal impact.
        \item Examples of negative societal impacts include potential malicious or unintended uses (e.g., disinformation, generating fake profiles, surveillance), fairness considerations (e.g., deployment of technologies that could make decisions that unfairly impact specific groups), privacy considerations, and security considerations.
        \item The conference expects that many papers will be foundational research and not tied to particular applications, let alone deployments. However, if there is a direct path to any negative applications, the authors should point it out. For example, it is legitimate to point out that an improvement in the quality of generative models could be used to generate deepfakes for disinformation. On the other hand, it is not needed to point out that a generic algorithm for optimizing neural networks could enable people to train models that generate Deepfakes faster.
        \item The authors should consider possible harms that could arise when the technology is being used as intended and functioning correctly, harms that could arise when the technology is being used as intended but gives incorrect results, and harms following from (intentional or unintentional) misuse of the technology.
        \item If there are negative societal impacts, the authors could also discuss possible mitigation strategies (e.g., gated release of models, providing defenses in addition to attacks, mechanisms for monitoring misuse, mechanisms to monitor how a system learns from feedback over time, improving the efficiency and accessibility of ML).
    \end{itemize}
    
\item {\bf Safeguards}
    \item[] Question: Does the paper describe safeguards that have been put in place for responsible release of data or models that have a high risk for misuse (e.g., pretrained language models, image generators, or scraped datasets)?
    \item[] Answer: \answerNA{} 
    \item[] Justification: The paper is theoretical and does not have any model to release.
    \item[] Guidelines:
    \begin{itemize}
        \item The answer NA means that the paper poses no such risks.
        \item Released models that have a high risk for misuse or dual-use should be released with necessary safeguards to allow for controlled use of the model, for example by requiring that users adhere to usage guidelines or restrictions to access the model or implementing safety filters. 
        \item Datasets that have been scraped from the Internet could pose safety risks. The authors should describe how they avoided releasing unsafe images.
        \item We recognize that providing effective safeguards is challenging, and many papers do not require this, but we encourage authors to take this into account and make a best faith effort.
    \end{itemize}

\item {\bf Licenses for existing assets}
    \item[] Question: Are the creators or original owners of assets (e.g., code, data, models), used in the paper, properly credited and are the license and terms of use explicitly mentioned and properly respected?
    \item[] Answer: \answerNA{} 
    \item[] Justification: The paper does not utilize any existing code. Every existing idea or intellectual result is properly referenced upon its introduction.
    \item[] Guidelines:
    \begin{itemize}
        \item The answer NA means that the paper does not use existing assets.
        \item The authors should cite the original paper that produced the code package or dataset.
        \item The authors should state which version of the asset is used and, if possible, include a URL.
        \item The name of the license (e.g., CC-BY 4.0) should be included for each asset.
        \item For scraped data from a particular source (e.g., website), the copyright and terms of service of that source should be provided.
        \item If assets are released, the license, copyright information, and terms of use in the package should be provided. For popular datasets, \url{paperswithcode.com/datasets} has curated licenses for some datasets. Their licensing guide can help determine the license of a dataset.
        \item For existing datasets that are re-packaged, both the original license and the license of the derived asset (if it has changed) should be provided.
        \item If this information is not available online, the authors are encouraged to reach out to the asset's creators.
    \end{itemize}

\item {\bf New Assets}
    \item[] Question: Are new assets introduced in the paper well documented and is the documentation provided alongside the assets?
    \item[] Answer: \answerNA{} 
    \item[] Justification: The paper does not release new assets.
    \item[] Guidelines:
    \begin{itemize}
        \item The answer NA means that the paper does not release new assets.
        \item Researchers should communicate the details of the dataset/code/model as part of their submissions via structured templates. This includes details about training, license, limitations, etc. 
        \item The paper should discuss whether and how consent was obtained from people whose asset is used.
        \item At submission time, remember to anonymize your assets (if applicable). You can either create an anonymized URL or include an anonymized zip file.
    \end{itemize}

\item {\bf Crowdsourcing and Research with Human Subjects}
    \item[] Question: For crowdsourcing experiments and research with human subjects, does the paper include the full text of instructions given to participants and screenshots, if applicable, as well as details about compensation (if any)? 
    \item[] Answer: \answerNA{} 
    \item[] Justification: The answer NA means that the paper does not involve crowdsourcing nor research with human subjects.
    \item[] Guidelines:
    \begin{itemize}
        \item The answer NA means that the paper does not involve crowdsourcing nor research with human subjects.
        \item Including this information in the supplemental material is fine, but if the main contribution of the paper involves human subjects, then as much detail as possible should be included in the main paper. 
        \item According to the NeurIPS Code of Ethics, workers involved in data collection, curation, or other labor should be paid at least the minimum wage in the country of the data collector. 
    \end{itemize}

\item {\bf Institutional Review Board (IRB) Approvals or Equivalent for Research with Human Subjects}
    \item[] Question: Does the paper describe potential risks incurred by study participants, whether such risks were disclosed to the subjects, and whether Institutional Review Board (IRB) approvals (or an equivalent approval/review based on the requirements of your country or institution) were obtained?
    \item[] Answer: \answerNA{} 
    \item[] Justification: The answer NA means that the paper does not involve crowdsourcing nor research with human subjects.
    \item[] Guidelines:
    \begin{itemize}
        \item The answer NA means that the paper does not involve crowdsourcing nor research with human subjects.
        \item Depending on the country in which research is conducted, IRB approval (or equivalent) may be required for any human subjects research. If you obtained IRB approval, you should clearly state this in the paper. 
        \item We recognize that the procedures for this may vary significantly between institutions and locations, and we expect authors to adhere to the NeurIPS Code of Ethics and the guidelines for their institution. 
        \item For initial submissions, do not include any information that would break anonymity (if applicable), such as the institution conducting the review.
    \end{itemize}

\end{enumerate}
\else
\fi

\end{document}